\renewcommand{\paragraph}[1]{\textbf{#1}\hskip 6pt}
\theoremstyle{plain}
\newtheorem{thm}{Theorem}[]
\theoremstyle{definition}
\theoremstyle{remark}
\Crefname{con}{Conjecture}{Conjectures}
\title{Combining Explicit and Implicit Regularization for Efficient Learning in Deep Networks}
\author{%
  Dan Zhao \\
  New York University \\
  \texttt{dz1158@nyu.edu} \\
}
\begin{document}

\maketitle

\begin{abstract}
Works on implicit regularization have studied gradient trajectories during the optimization process to explain why deep networks favor certain kinds of solutions over others. In deep linear networks, it has been shown that gradient descent implicitly regularizes toward low-rank solutions on matrix completion/factorization tasks. Adding depth not only improves performance on these tasks but also acts as an accelerative pre-conditioning that further enhances this bias towards low-rankedness. Inspired by this, we propose an explicit penalty to mirror this implicit bias which only takes effect with certain adaptive gradient optimizers (e.g. Adam). This combination can enable a degenerate single-layer network to achieve low-rank approximations with generalization error comparable to deep linear networks, making depth no longer necessary for learning. The single-layer network also performs competitively or out-performs various approaches for matrix completion over a range of parameter and data regimes despite its simplicity. Together with an optimizer’s inductive bias, our findings suggest that explicit regularization can play a role in designing different, desirable forms of regularization and that a more nuanced understanding of this interplay may be necessary. 
\end{abstract}

\section{Introduction}
Much work has poured into understanding why and how highly over-parameterized, deep neural networks with more parameters than training examples generalize so effectively despite long-held notions to the contrary \cite{Belkin:2018wm,Bartlett:2020ck}. This generalization puzzle has only deepened as deep learning models often generalize well simply by optimizing its training error on an under-determined problem.

To explain this, previous works have focused on how gradient-based optimization induces an implicit bias on optimization trajectories, particularly in deep (i.e., over-parameterized) settings, tending towards solutions with certain properties \cite{Gunasekar:2017vv,Arora:2019ug} like those that generalize well. In contrast, while explicit regularization has seen wide-spread usage in various settings (e.g. weight decay, dropout \cite{Srivastava:2014ww}), its role in explaining generalization has been less certain given its inability to prevent over-fitting on random labels \cite{Zhang:2016ve} or its absence in deep models that generalize well on their own.

Some works have focused on a simple test-bed to formalize and isolate the mechanisms through which implicit regularization operates---namely, \textit{matrix completion}. Given some observed subset of an unknown, low-rank matrix $W^\star$, the task is to recover the unseen entries. A key observation \cite{Gunasekar:2017vv} has been how gradient descent on a shallow linear neural network, with sufficiently small learning rate and near-zero initialization, pushes towards low-rank solutions on its own. This has led to the conjecture \cite{Gunasekar:2017vv} that gradient descent induces implicit regularization that minimizes the nuclear norm.

This conjecture has been put into doubt by work \cite{Arora:2019ug}  showing that gradient descent not only promotes low-rank solutions in the shallow case on matrix completion tasks, but its implicit regularization is further strengthened with increased depth---deep linear neural networks \cite{Arora:2019ug}) are able to produce solutions with lower rank and more accurate completion than those from minimizing the nuclear norm. Others \cite{Arora:2018vn} have also shown how increased depth or over-parameterization can provide an accelerative pre-conditioning to regression problems for faster convergence in deep linear networks.

\paragraph{Contributions}
Despite these findings, some questions still remain: 
\begin{itemize}
  \item Are there explicit norm-based penalties that can mirror the effect of implicit regularization so as to better harness this phenomenon for improved or more efficient learning? Previous work \cite{Arora:2019ug} has conjectured that implicit regularization cannot be characterized by explicit norm-based penalties, but whether these penalties can produce similar effects is unclear. 
  
  \item Do implicit and explicit forms of regularization interact in any meaningful way? Can we modify the implicit biases of optimizers with explicit regularizers so as to promote better kinds of performance? Some work \cite{barrett2021gradImpReg} has begun to draw inspiration from implicit regularization to create explicit regularizers, but their interactions are less clear.
  
  \item Previous works \cite{Arora:2018vn,Arora:2019ug} have shown that depth can act as a powerful pre-conditioning to accelerate convergence or enhance implicit tendencies towards certain simpler or well-generalizing solutions. Can this effect be produced without depth? 
\end{itemize}

To try and shed more light on these questions, we propose an explicit penalty that takes the ratio between the nuclear norm of a matrix and its Frobenius norm ($\norm{W}_{\star}/\norm{W}_{F}$) and study its effects on the task of matrix completion. This penalty can be interpreted as an adaptation of the Hoyer measure \cite{hurley2009comparing} to the spectral domain or as a particular normalization of the nuclear-norm penalty that is commonly used to proxy for rank in convex relaxations of the problem.

Studying implicit regularization can be difficult as it is not always possible to account for all other sources of implicit regularization. For a more precise study of the implicit biases of optimizers and their interactions with our penalty, we use matrix completion and deep linear networks as tractable, yet expressive, and well-understood test-beds that admit a crisp formulation of the mechanism through which implicit regularization operates \cite{gunasekar2018implicit, Arora:2019ug}. In particular, we show the following:

\begin{enumerate}
    \item A depth 1 linear neural network (i.e., a degenerate network without any depth) trained with this penalty can produce the same rank reduction and deliver comparable, if not better, generalization performance than a deep linear network---all the while converging faster. In short, depth is no longer necessary for learning.
  
    \item The above result only occurs under Adam and, to some extent, its close variants. This suggests that different optimizers, each with their own inductive biases, can interact differently with explicit regularizers to modify dynamics and promote certain solutions over others.

    \item With the penalty, we achieve comparable or better generalization and rank-reduction performance against various other techniques (\cref{fig:combo}) even in low data regimes (i.e., fewer observed entries during training) where other approaches may have no recovery guarantees.

    \item Furthermore, the penalty under Adam enables linear neural networks of all depth levels to produce similar well-generalizing low-rank solutions largely independent of depth, exhibiting a degree of \textit{depth invariance}.
\end{enumerate}

In this specific case, it appears that the learning dynamics which occur through the inductive bias of depth can be compressed or replaced with the right combination of optimization algorithm and explicit penalty. These properties may make deep linear networks more efficient for a variety of related matrix completion, estimation, or factorization tasks, ranging from applications in efficient reinforcement learning \cite{2020MEQLearn} to adversarial robustness \cite{2019MEnet}, NLP \cite{2019MatFactNLPEmbed}, and others.

Previous conjectures and works \cite{Arora:2018vn, Arora:2019ug, Zhang:2016ve} have largely dismissed the necessity of explicit regularization in understanding generalization in deep learning, leaving its overall role unclear. Similarly, despite its relative popularity, Adam and other adaptive optimizers have received their share of doubt \cite{wilson2017adaptgradient,wadia2020destroy} regarding their effectiveness in producing desirable solutions. Our results suggest a subtle but perhaps important interplay between the choice of the optimizer, its inductive bias, and the explicit penalty in producing different optimization dynamics and, hence, different kinds of solutions. If so, then in these interactions, explicit regularization may yet have a role to play. 

\paragraph{Paper Overview} In \cref{sec:related_work}, we review previous work. \cref{sec:findings} details our experiments and findings. In \cref{sec:empirical}, we extend our experiments to a common real-world benchmark and compare our method with other methods. We conclude in \cref{sec:discuss} with a discussion on future work.

\section{Related Work}
\label{sec:related_work}

Implicit regularization has been studied extensively with recent work focusing on optimization trajectories in deep, over-parameterized settings \cite{Arora:2019ug, Arora:2018vn, bah2019learning}. 
One avenue in particular has focused on linear neural networks (LNNs) \cite{saxe2013exact,goodfellow2016deep,hardt2016identity,kawaguchi2016deep} given their relative tractability and the similarity of their learning dynamics to those of non-linear networks. In settings with multiple optima, \cite{gunasekar2018implicit} has shown how LNNs trained on separable data can converge with an implicit bias to the max margin solution. Others \cite{Arora:2019ug} have demonstrated how gradient flow converges towards low-rank solutions where depth acts as an accelerative pre-conditioning \cite{Arora:2018vn}. In \cite{precondgen}, for natural and vanilla gradient descent (GD), different kinds of pre-conditioning are shown to impact bias-variance and risk trade-offs in over-parameterized linear regression, but this is less clear for adaptive gradient optimizers. 

Building upon notions of acceleration and pre-conditioning dating back to Nesterov \cite{Nesterov:2013gs} and Newton, Adam's \cite{Kingma:2015us} effectiveness---and its close variants \cite{nadam, radam, adamw}---in optimizing deep networks faster makes clear the importance of adaptive pre-conditioning. Though some \cite{wilson2017adaptgradient, wadia2020destroy} have doubted their effectiveness due to potential issues that can harm generalization, others \cite{xu2020secondorder,levy2019geogradient,zhang2019abatchalgo, heusel2017GANlocalNash} have demonstrated advantages from their adaptive preconditioning; despite speeding up optimization, however, the effect of pre-conditioning on generalization has been less clear as some \cite{keskar2016gengap, dinh2017sharp, Wu2018dyamic} have argued that the ``sharpness'' of minima achieved can vary depending on the choice of optimizer. More recent works have characterized pre-conditioning and mechanisms of implicit regularization around ``edges of stability'' for optimizers where the training regime occurs within a certain sharpness threshold, defined as the maximum eigenvalue of the loss Hessian \cite{arora2018EOS, cohen2021EOS, cohen2022adaptiveEOS}.

Matrix completion and factorization have themselves long been an important focus for areas like signal recovery \cite{Candes:kr} and recommendation systems \cite{bennett2007netflix} from theoretical bounds for recovery and convergence \cite{candes2010power, Recht:2011up, candes2009exact,ma2019implicit} to practical algorithms and implementations \cite{softimpute2010,Jain:2013eb}. We refer to \cite{Chi:2019hy} for a comprehensive survey. These tasks and related ones have also served as test-beds for understanding implicit regularization; \cite{Arora:2019ug, Gunasekar:2017vv} use matrix factorization and sensing to study gradient flow's implicit bias towards low-rank solutions, conjecturing that algorithmic regularization may not correspond to minimization of any norm-based penalty. \cite{wu2021mirror} studies the implicit bias of mirror descent on matrix sensing, showing that the solution interpolates between the nuclear and Frobenius norms depending on the underlying matrix. In a related thread, \cite{arora2022impRegMirror} has shown that gradient flow with any commuting parameterization is equivalent to continuous mirror descent with a specific Legendre function, generalizing previous results  \cite{gunasekar2020KernelOver, Gunasekar:2017vv, woodworth2021ImpBiasInit} that have characterized the implicit bias of GD.

Interestingly, \cite{barrett2021gradImpReg} illustrates how the discrete steps of GD can regularize optimization trajectories away from large gradients towards flatter minima, developing an explicit regularizer to embody and reinforce this implicit effect directly. To our knowledge, our paper is the first to propose a ratio penalty in matrix completion to study the interplay between explicit and implicit regularization. 

\section{Experiments and Findings}
\label{sec:findings}
\subsection{Setup}
Formally, we have a ground-truth matrix $W^{\star} \in \mathbb{R}^{m \times n}$ whose observed entries are indexed by the set $\Omega$. We define the projection $\mathcal{P}_{\Omega}(W^{\star})$ to be a $m \times n$ matrix such that the entries with indices in $\Omega$ remain while the rest are masked with zeros. We are interested in the following optimization:
\begin{align}
  \min_{W} \mathcal{L}(W) \coloneqq \min_{W} \norm{ \mathcal{P}_{\Omega}(W^{\star}) - \mathcal{P}_{\Omega}(W) }_{F}^2 + \lambda R(W)
  \label{eqn:setup}
\end{align}
where $\norm{\,\cdot\,}_{F}$ is the Frobenius norm, $R(\cdot)$ is an explicit penalty, and $\lambda \geq 0$ is the tuning parameter. While we consider various penalties, our main focus is demonstrating the effects of our proposed penalty $R(W) = \vert \vert W \vert \vert_*/\vert \vert W \vert \vert_F$. 
Following earlier works \cite{Arora:2015vt}, we define a \textit{deep linear neural network} (DLNN) through the following over-parameterization, or deep factorization, of $W$:
\begin{align}
  W = W_{N}W_{N-1}\ldots W_1
  \label{eqn:factorize}
\end{align}
under the loss function in \eqref{eqn:setup} where $W_i \in \mathbb{R}^{d_i \times d_{i-1}}, i \in \{1, \ldots, N\}$ denotes the weight matrix corresponding to depth $i$ or the $i$-th layer. Here, $N$ denotes the depth of the network/factorization where $N = 2$ corresponds to \textit{matrix factorization} or a shallow network, $N \geq 3$ corresponds to \textit{deep matrix factorization} or a deep network, and $N = 1$ is the degenerate case (no depth/factorization). We refer to the matrix $W$, the product of the $N$ weight matrices in \cref{eqn:factorize}, as the \textit{end-product matrix} as per \cite{Arora:2019ug}. As such, the end-product matrix $W$ is the solution produced in estimating $W^{\star}$ or, conveniently, the DLNN itself.

In our analyses, we focus on rank 5 matrices as the ground truth $W^*$ and parameterize our DLNN $W$ with $d_0 = \ldots = d_N = m = n = 100$ (i.e., weight matrices $W_i \in \mathbb{R}^{100 \times 100}, \ \forall i$) for illustrative purposes, but our results extend to other ranks (e.g. see \cref{sec:rank10}). We follow previous work \cite{Arora:2019ug} and employ the effective rank \cite{roy2007effective} of a matrix to track and quantify the rank of $W$ in our experiments, defined as: $\operatorname{e-rank}(W) = \exp\left\{ H(p_1, \ldots, p_n) \right\}$
where $H$ is the Shannon entropy, $p_i = \sigma_i/\norm{\sigma}_{1}$, $\{\sigma_i\}$ are the unsigned singular values of $W$, and $\norm{\cdot}_1$ is the $\ell_1$ norm. The numerical instability of the numeric rank measure is a known issue \cite{roy2007effective}, resulting in unreliable and unstable rank estimates. We leave a detailed discussion of experiment settings to \cref{sec:implementation_details}.



\subsection{Depth, without penalty}
We first establish a baseline by characterizing the inductive biases of un-regularized gradient descent and un-regularized Adam to better understand their dynamics in the presence of our penalty.

\begin{figure}[!ht]
    \centering
    \includegraphics[width=.9\linewidth]{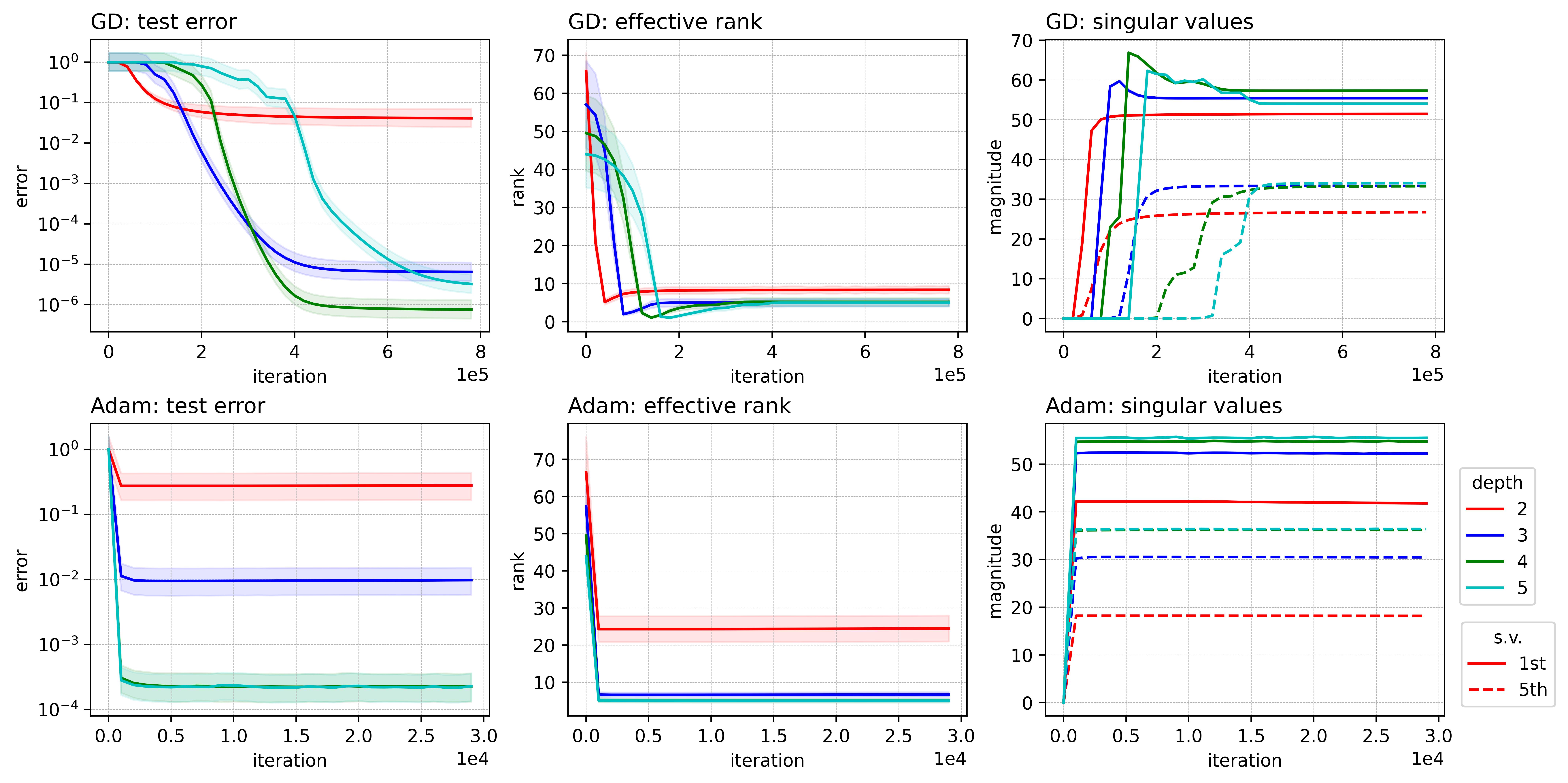}
    \caption{Dynamics of \textit{un-regularized gradient descent (GD) and Adam}. Plots show the performance of GD over networks of depths 2/3/4/5 for rank 5 matrices of size $100\times100$.  Colors correspond to different depth levels and shaded regions correspond to error bands. The left column depicts generalization error as a function of depth and training iterations. The middle column depicts the change in effective rank across depths and over training iterations. The right column shows the 1\textsuperscript{st} and 5\textsuperscript{th} largest singular values for each depth across training iterations. For singular values, a solid line indicates the 1\textsuperscript{st} largest singular value while a dotted line indicates the 5\textsuperscript{th} largest within each depth level (colored lines). We omit the remaining singular values to avoid clutter.}
    \label{fig:gd}
\end{figure}
\paragraph{Gradient Descent}
Previous work \cite{Arora:2019ug} has shown that depth enhances gradient descent's implicit regularization towards low rank, characterized by the following trajectories on the end-product matrix $W$ and its singular values $\{
\sigma_i\}$ (for details, see \cite{Arora:2018vn, Arora:2019ug}):
\vspace{-.7mm}
\begin{align}
\dot{\sigma}_i &= -N (\sigma_i(t)^2)^{\frac{N-1}{N}} \cdot \vb{u}_i^\top \nabla_W \mathcal{L}(W(t)) \vb{v}_i
\label{eqn:gd_sv} \\
\mathrm{vec}(\dot{W}) &= -P_{W} \mathrm{vec} \left(\nabla_W \mathcal{L}(W)\right)
\label{eqn:gd_w}
\end{align}
where $\dot{\sigma}_i$ is the time derivative of $\sigma_i(t)$, the $i$-th singular value of $W(t)$, $\{ \vb{u}_i, \vb{v}_i \}$ are the left and right singular vectors of $W(t)$ corresponding to $\sigma_i(t)$, $N$ is the network's depth, $\nabla_W \mathcal{L}(W(t))$ is the loss gradient with respect to the end-product matrix $W$ at time $t$,  $\text{vec}(\cdot)$ denotes (column-first order) vectorization, $\dot{W} = dW/dt$ is the time evolution of the end-product matrix or (equivalently) the DLNN itself, $P_{W} = \sum_{j=1}^N (W^\top W)^{\frac{N-j}{N}} \otimes (W W^\top)^{\frac{j-1}{N}}$, and $\otimes$ denotes the Kronecker product. We suppress the explicit dependency on $t$ for simplicity and note that full dynamics in \cref{eqn:gd_sv} require non-degeneracy (non-trivial depth, $N > 1$); otherwise, they reduce to just $\vb{u}_i^\top \nabla_W \mathcal{L}(W(t)) \vb{v}_i$. 

In \cref{eqn:gd_w}, $P_W$ can be seen as a pre-conditioning onto the gradient that, with sufficient depth ($N \geq 2)$, accelerates movements already taken in the optimization \cite{Arora:2018vn}. As depth/over-parameterization increases, this acceleration intensifies while larger singular values and their movements become more pronounced than their smaller counterparts, driving singular value separation and a decrease in rank of the recovered matrix (\cref{fig:gd} top row). The singular values evolve at uneven paces depending on the depth; increasing depth increases the gap in the time it takes between the 1\textsuperscript{st} and 5\textsuperscript{th} largest singular values to develop while also taking longer to  stabilize. These effects are even more pronounced when comparing the five largest singular values to the remaining ones. 
Only with sufficient depth ($N > 2$) do solutions produced by un-penalized gradient descent minimize rank so as to recover the rank of the underlying matrix and produce solutions with low test error. 

\paragraph{Adam}
Analyzing Adam can be difficult given its exponentially moving average of gradients; to simplify our analysis, we borrow some assumptions from \cite{Adam:sgn} to approximate Adam's dynamics via gradient flow by assuming that the discounted gradients can be well-approximated by their expectation. (see \cref{sec:math} for more details).

\begin{thm}
    Under the assumptions above and of \cite{Arora:2019ug}, the trajectory of the singular values $\sigma_i$ of the end-product matrix $W$ can be approximately characterized as:
    \begin{equation}
    \dot{\sigma}_i =  - \mathrm{vec}(\vb v_i \vb u_i^\top)^\top  P_{W, G} \mathrm{vec}(\nabla_{W} \mathcal{L}(W))
    \label{eqn:adam_svtraj}
    \end{equation}
    Similarly, the trajectory of the end-product matrix $W$ itself can be approximately characterized as:
    \begin{equation}
    \mathrm{vec}(\dot{W}) = -P_{W, G}\mathrm{vec} (\nabla_W \mathcal{L}(W))
    \label{eqn:adam_w}
    \end{equation}
where $P_{W,G} = \sum_{j=1}^N ( (W W^\top)^{\frac{j-1}{N}} \otimes (W^\top W)^{\frac{N-j}{N}}) G_j$ is p.s.d. and $G_j$ is a diagonal matrix for layers $j \in \{1,\hdots,N\}$. Specifically, $G_j = \mathrm{diag}(\mathrm{vec}(S_j))$, $[S_j]_{m,n} = [(\nabla_{W_j} \mathcal{L}(W)^2 + s^2_j)^{-1/2}]_{m,n}$, $\nabla_{W_j}\mathcal{L}(W) = \partial \mathcal{L}(W)/\partial W_j$ is layer $j$'s loss gradient, and $s^2_j = \mathrm{var}(\nabla_{W_j}\mathcal{L}(W))$. 
\label{thm:adam_W_noreg}
\end{thm}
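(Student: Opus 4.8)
The plan is to treat Adam, under the stated approximation, as a \emph{layer-wise, entrywise-preconditioned} gradient flow, to propagate this flow through the factorization $W=W_N\cdots W_1$ by the chain and product rules, and then to extract the singular-value dynamics from the same first-order perturbation identity used in \cite{Arora:2019ug}. Concretely, under the \cite{Adam:sgn}-style assumption that the discounted first and second moments are well approximated by $\nabla_{W_j}\mathcal{L}(W)$ and by $\nabla_{W_j}\mathcal{L}(W)^{2}+s_j^2$ respectively, Adam's per-layer step $-\eta\,\hat m_j/(\sqrt{\hat v_j}+\epsilon)$ passes, in the small-learning-rate limit, to the continuous flow $\dot W_j=-S_j\odot\nabla_{W_j}\mathcal{L}(W)$, where $\odot$ is the entrywise product and $[S_j]_{ab}=\bigl([\nabla_{W_j}\mathcal{L}(W)]_{ab}^{2}+s_j^2\bigr)^{-1/2}$ is exactly the $S_j$ of the statement. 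This is the starting point for everything below.

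\textbf{Lifting the flow to the end product.} Write $W_{>j}\coloneqq W_N\cdots W_{j+1}$ and $W_{<j}\coloneqq W_{j-1}\cdots W_1$ (empty products equal the identity), so that $W=W_{>j}W_jW_{<j}$. The chain rule gives $\nabla_{W_j}\mathcal{L}(W)=W_{>j}^\top\,\nabla_W\mathcal{L}(W)\,W_{<j}^\top$, and the product rule gives $\dot W=\sum_{j=1}^N W_{>j}\dot W_j W_{<j}=-\sum_{j=1}^N W_{>j}\bigl(S_j\odot\nabla_{W_j}\mathcal{L}(W)\bigr)W_{<j}$. Vectorizing with $\mathrm{vec}(AXB)=(B^\top\otimes A)\mathrm{vec}(X)$ and $\mathrm{vec}(S\odot X)=\mathrm{diag}(\mathrm{vec}(S))\,\mathrm{vec}(X)$, and setting $C_j\coloneqq W_{<j}^\top\otimes W_{>j}$ and $G_j\coloneqq\mathrm{diag}(\mathrm{vec}(S_j))$, one gets $\mathrm{vec}(\dot W)=-\bigl(\sum_{j=1}^N C_jG_jC_j^\top\bigr)\mathrm{vec}(\nabla_W\mathcal{L}(W))$, using that $\mathrm{vec}(\nabla_{W_j}\mathcal{L}(W))=C_j^\top\mathrm{vec}(\nabla_W\mathcal{L}(W))$. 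Each $G_j$ is diagonal with strictly positive entries, so every summand $C_jG_jC_j^\top$ is manifestly positive semidefinite, which gives the claimed positivity of $P_{W,G}$. Invoking the balanced / near-zero-initialization hypotheses of \cite{Arora:2019ug}, which yield $W_{<j}^\top W_{<j}=(W^\top W)^{\frac{j-1}{N}}$ and $W_{>j}W_{>j}^\top=(WW^\top)^{\frac{N-j}{N}}$, hence $C_jC_j^\top=(W^\top W)^{\frac{j-1}{N}}\otimes(WW^\top)^{\frac{N-j}{N}}$, and reindexing and reordering the Kronecker factors to the paper's vectorization convention, turns $\sum_j C_jG_jC_j^\top$ into the $P_{W,G}$ of the statement; this proves \eqref{eqn:adam_w}.

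\textbf{Singular-value trajectory.} For $W(t)$ with simple singular values and a differentiable SVD $W=U\Sigma V^\top$ (the regularity already assumed in \cite{Arora:2019ug}), differentiating $U^\top U=V^\top V=I$ shows $U^\top\dot U$ and $V^\top\dot V$ are skew-symmetric, which gives the standard first-order identity $\dot\sigma_i=\vb{u}_i^\top\dot W\vb{v}_i$ for the $i$-th singular value. Writing this as a Frobenius inner product $\langle\vb{u}_i\vb{v}_i^\top,\dot W\rangle$, i.e.\ (in the paper's vec convention) the linear pairing $\mathrm{vec}(\vb{v}_i\vb{u}_i^\top)^\top\mathrm{vec}(\dot W)$, and substituting \eqref{eqn:adam_w}, gives \eqref{eqn:adam_svtraj}. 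As a check, in the degenerate case $N=1$ one has $C_1=I$ and $P_{W,G}=G_1=\mathrm{diag}(\mathrm{vec}(S_1))$, so the right-hand side collapses to the entrywise-rescaled analogue of the $N=1$ reduction noted for \eqref{eqn:gd_sv}.

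\textbf{The main obstacle.} The algebra above is routine; the real content lies in the two approximations it rests on. The first is the reduction of Adam's discrete, moment-averaged iteration to the clean flow $\dot W_j=-S_j\odot\nabla_{W_j}\mathcal{L}$: quantifying when the discounting is negligible and when $\hat v_j$ genuinely concentrates on $\nabla_{W_j}\mathcal{L}^{2}+s_j^2$ is delicate and is itself only heuristic in \cite{Adam:sgn}. The second, and more serious, is that the identities relating $W_{<j},W_{>j}$ to fractional powers of $WW^\top$ and $W^\top W$ hinge on \emph{balancedness}, which for plain gradient flow is conserved ($\tfrac{d}{dt}(W_{j+1}^\top W_{j+1}-W_jW_j^\top)=0$) but is \emph{not} exactly conserved once each layer's gradient is rescaled entrywise by $S_j$. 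Thus these identities must be imposed as a near-initialization assumption rather than derived from the dynamics, which is exactly why the statement claims only an \emph{approximate} characterization; making it rigorous would require controlling the growth of the balancedness defect along the Adam flow, and I expect that to be the principal difficulty.
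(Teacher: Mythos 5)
Your overall route is exactly the paper's: pass Adam to the per-layer flow $\dot W_j=-S_j\odot\nabla_{W_j}\mathcal{L}(W)$ under the \cite{Adam:sgn}-style moment assumptions, lift to $\dot W$ by the product rule, vectorize with the Kronecker/Hadamard identities, invoke the balancedness identities of \cite{Arora:2018vn,Arora:2019ug} to get the fractional powers, and finally read off $\dot\sigma_i=\mathrm{vec}(v_iu_i^\top)^\top\mathrm{vec}(\dot W)$ from the analytic SVD---that last step is verbatim the paper's. The one step to flag is your final identification. Your algebra correctly yields $\mathrm{vec}(\dot W)=-\sum_j C_jG_jC_j^\top\,\mathrm{vec}(\nabla_W\mathcal{L}(W))$ with $C_j=W_{<j}^\top\otimes W_{>j}$, but balancedness only rewrites $C_jC_j^\top$ as a Kronecker product of fractional powers; it does not turn $C_jG_jC_j^\top$ into $(C_jC_j^\top)G_j$, because the diagonal $G_j$ sits between $C_j$ and $C_j^\top$ and does not commute with $C_j^\top$ in general. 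So your derived preconditioner and the stated $P_{W,G}=\sum_j\bigl((WW^\top)^{\frac{j-1}{N}}\otimes(W^\top W)^{\frac{N-j}{N}}\bigr)G_j$ are genuinely different operators, and ``reindexing and reordering the Kronecker factors'' does not bridge that gap---as written, that sentence is a non sequitur. Notably, the paper's own appendix performs the same silent swap: in its vectorization chain it replaces $\mathrm{vec}(ALB)\odot\mathrm{vec}(S_j)$ by $(B^\top\otimes A)(\mathrm{vec}(L)\odot\mathrm{vec}(S_j))$, i.e.\ it moves the Hadamard factor past the Kronecker matrix, which is precisely what produces the trailing $G_j$ in the theorem. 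Your symmetric form is therefore the more faithful outcome of the computation, and it makes positive semidefiniteness immediate (each $C_jG_jC_j^\top\succeq0$), whereas the paper must argue separately that $P_jG_j$, a product of a symmetric p.s.d.\ matrix with a positive diagonal one, retains the required positivity. Either keep $\sum_j C_jG_jC_j^\top$ as the conclusion, or state explicitly that matching the theorem's displayed $P_{W,G}$ requires the additional commutation of $G_j$ with the Kronecker factor as a further approximation.

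Two smaller remarks. The exponent ordering in the theorem statement versus the appendix ($\frac{j-1}{N}$ and $\frac{N-j}{N}$ swapped) is an internal inconsistency of the paper's conventions, so your ``reordering to the paper's vectorization convention'' is harmless on that score. And your closing observation---that balancedness is conserved under plain gradient flow but not once each layer's gradient is rescaled entrywise by $S_j$, so the fractional-power identities must be imposed rather than derived---is correct and is exactly the content the paper hides behind the word ``approximately''; the paper simply imports those identities from \cite{Arora:2018vn,Arora:2019ug} without revisiting their validity under the Adam flow, so on this point your write-up is more candid than the original.
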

\begin{proof}
See \cref{sec:math}.
\end{proof}
Via this approximation, the pre-conditioning induced by Adam can be characterized as a modification of gradient descent's $P_W$, which now normalizes each layer by the square-root of its squared layer-wise loss gradient $(\partial \mathcal{L}(W)/\partial W_j)^2$ and the gradient variance $s^2_j$, before summing across all depth levels. Unlike before, the variance of the loss gradient comes into play. Whereas before the pre-conditioning served as a purely accelerative effect that intensifies with depth, its normalization by the gradient variance of each layer $W_j$ can now either dampen or further accelerate the trajectory.

Empirically, we see that depth enhances the implicit bias towards low-rank solutions for both Adam and gradient descent albeit differently (\cref{fig:gd}, middle column); in deeper regimes ($N > 2$), Adam minimizes rank to exact/near-exact rank recovery more smoothly than gradient descent via faster ($10^4$ vs. $10^5$ iterations) and more uniform convergence (\cref{fig:gd}, bottom row). With Adam, singular value dynamics exhibit significantly more uniform evolution regardless of depth in contrast to gradient descent (\cref{fig:gd}, right), leading to different types of trajectories and solutions.
\subsection{Depth, with penalty}
\begin{figure}[!ht]
    \centering
    \includegraphics[width=.9\linewidth]{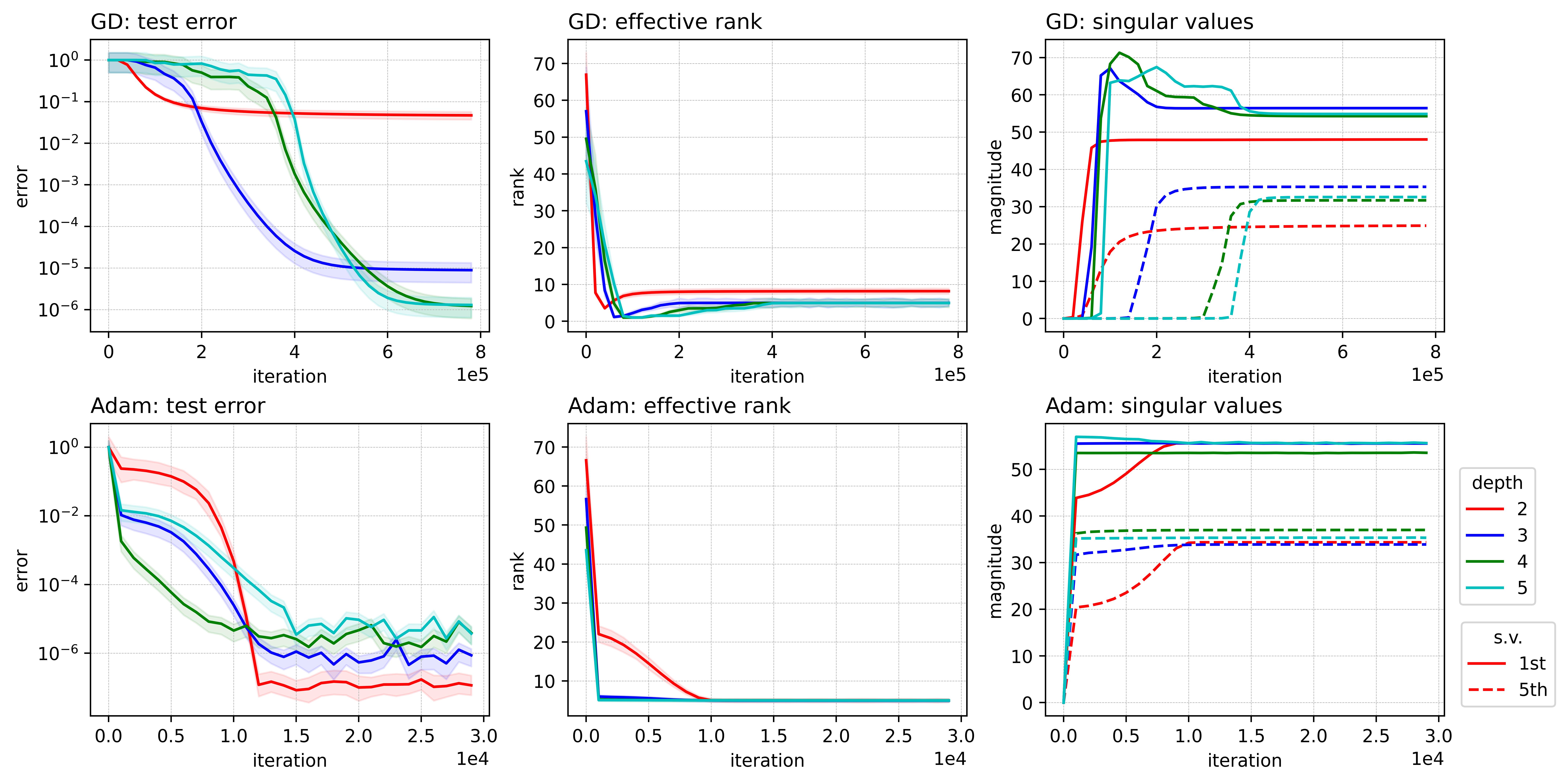}
    \caption{Dynamics of \textit{regularized gradient descent (GD) and regularized Adam} with our penalty. Plots above show the performance of GD with our proposed penalty over networks of depths 2/3/4/5 for rank 5 matrix completion. Setup is identical to \cref{fig:lm}. Here, $\lambda = 10^{-4}$  but results hold for a range of values $\lambda \in [10^{-4},10^{-1}]$.
    }
    \label{fig:gd+r} 
\end{figure}
\paragraph{Gradient Descent} We now characterize the dynamics of gradient flow with our penalty.
\begin{thm}
    Under the assumptions of \cite{Arora:2018vn}, the evolution of the singular values of the end-product matrix, under gradient descent with the penalty, can be approximated in the following fashion:
    \begin{equation}
     \dot{\sigma}_r = \dot{\sigma}_r^{\textrm{GF}} 
   - \frac{\lambda N}{\vert \vert W \vert \vert_F^2} \left(1 - \frac{\vert\vert W \vert\vert_*}{\vert\vert W \vert\vert_F}\right) \sigma_r^{\frac{3N - 2}{2}} 
   \label{eqn:gdreg_sv}
    \end{equation}
    where $\lambda \geq 0$ is the regularization parameter and $\dot{\sigma}_r^{\textrm{GF}}$ denotes the un-regularized singular value trajectory under gradient flow in \cref{eqn:gd_sv}.  
    Similarly, the evolution of $W$ can be approximated via: 
    \begin{equation}
    \mathrm{vec}(\dot{W}) = -P_{W}\left(\mathrm{vec} \left(\nabla_W \mathcal{L}(W)\right) + \lambda \frac{\mathrm{vec}(UV^\top - U \tilde{\Sigma} V^\top)}{\vert \vert W \vert \vert_F^2}\right) 
    \label{eqn:gdreg_w}
    \end{equation}
    where $U$, $V$ contain the left and right singular vectors of $W$, $P_W$ is the pre-conditioning in \cref{eqn:gd_w}, and $\tilde{\Sigma} = \frac{\vert \vert W \vert \vert_*}{\vert \vert W \vert \vert_F} \Sigma$ where $\Sigma$ contains the singular values of $W$ along its diagonal. \label{thm:gd_sv_reg}
\end{thm}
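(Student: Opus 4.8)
The plan is to obtain both displays as immediate consequences of the unregularized gradient-flow dynamics of the DLNN, i.e. \cref{eqn:gd_sv,eqn:gd_w}, which we may invoke under the assumptions of \cite{Arora:2018vn}. The only change caused by the penalty is that the objective becomes $\mathcal{L}(W) + \lambda R(W)$ with $R(W) = \norm{W}_*/\norm{W}_F$; since the gradient is linear, $\nabla_W(\mathcal{L} + \lambda R) = \nabla_W\mathcal{L} + \lambda\nabla_W R$, so the whole task reduces to (i) computing $\nabla_W R$ in closed form and (ii) substituting it into the two established dynamical equations and simplifying.

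For step (i) I would write the thin SVD $W = U\Sigma V^\top$ and use the standard identities $\nabla_W\norm{W}_* = UV^\top$ and $\nabla_W\norm{W}_F = W/\norm{W}_F$ (the latter from $\norm{W}_F^2 = \mathrm{tr}(W^\top W)$), then apply the quotient rule to $R = \norm{W}_*/\norm{W}_F$. Rewriting $\norm{W}_*\,W/\norm{W}_F = U\tilde\Sigma V^\top$ with $\tilde\Sigma = (\norm{W}_*/\norm{W}_F)\Sigma$ puts $\nabla_W R$ into the form $\norm{W}_F^{-2}\bigl(UV^\top - U\tilde\Sigma V^\top\bigr)$ (up to the $\norm{W}_F$ normalization bookkeeping). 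Feeding this into \cref{eqn:gd_w} through the pre-conditioner $P_W$ gives \cref{eqn:gdreg_w} at once.

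For the singular-value equation \cref{eqn:gdreg_sv} I would substitute $\nabla_W\mathcal{L}\mapsto\nabla_W\mathcal{L} + \lambda\nabla_W R$ into \cref{eqn:gd_sv}, which requires only the scalar $\vb u_r^\top(\nabla_W R)\vb v_r$. Orthonormality of the singular vectors gives $\vb u_r^\top(UV^\top)\vb v_r = 1$ and $\vb u_r^\top(U\tilde\Sigma V^\top)\vb v_r = \tilde\sigma_r = (\norm{W}_*/\norm{W}_F)\sigma_r$, so the penalty contributes a term proportional to $-\lambda N(\sigma_r^2)^{(N-1)/N}\norm{W}_F^{-2}\bigl(1 - (\norm{W}_*/\norm{W}_F)\sigma_r\bigr)$ on top of $\dot\sigma_r^{\textrm{GF}}$; collecting the powers of $\sigma_r$ (using $\tfrac{3N-2}{N} = 1 + \tfrac{2(N-1)}{N}$) and applying the same slow-variation approximation used elsewhere in the paper yields the stated correction $-\tfrac{\lambda N}{\norm{W}_F^2}\bigl(1 - \tfrac{\norm{W}_*}{\norm{W}_F}\bigr)\sigma_r^{(3N-2)/2}$.

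The main obstacle is not the algebra. First, the nuclear norm is non-differentiable exactly at low-rank matrices, which is where the flow is heading (the trailing $\sigma_i$ tend to $0$), so one must justify using $UV^\top$ as the gradient — e.g. by passing to the subdifferential and selecting the minimal-norm element, supported on the active singular subspace, or by arguing that along the trajectory the leading singular values remain simple and bounded away from zero. Second, passing from the exact penalty contribution to the compact stated form requires the same order-of-magnitude and slow-evolution approximations already used to derive \cref{eqn:gd_sv,eqn:gd_w} (and to reduce Adam to a gradient flow in \cref{thm:adam_W_noreg}); the write-up must be explicit about which $\norm{W}_F$ factors and which powers of $\sigma_r$ are retained versus absorbed. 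Beyond these two points, everything is a direct substitution.
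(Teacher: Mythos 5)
Your proposal follows the paper's own proof essentially verbatim: the paper likewise computes $\nabla_W R(W) = \frac{1}{\norm{W}_F^2}\bigl(UV^\top - \frac{\norm{W}_*}{\norm{W}_F}U\Sigma V^\top\bigr)$ via the nuclear-norm (sub)gradient and the quotient/chain rule, substitutes it additively into \cref{eqn:gd_w} and \cref{eqn:gd_sv}, evaluates the sandwich term $\vb u_r^\top \nabla_W R(W)\vb v_r = \frac{1}{\norm{W}_F^2}\bigl(1-\frac{\norm{W}_*}{\norm{W}_F}\sigma_r\bigr)$, and regroups to arrive at \cref{eqn:gdreg_sv}. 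If anything, you are more explicit than the paper about the two delicate points (the subgradient selection where the nuclear norm is non-smooth, and the approximation hidden in the final regrouping of the $\sigma_r$ powers, which the paper passes over as ``re-writing/grouping terms''), but the route is the same.
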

\vspace{-3mm}
\begin{proof}
    See \cref{sec:math} for details.
\end{proof}
\vspace{-3.5mm} The penalty can be seen as an additive component to the original trajectory in \cref{eqn:gd_sv} that also intensifies with increased depth, making movements in larger singular values more pronounced than those of smaller ones. As a result, it can enable more pronounced singular value separation than before ($\frac{2N-1}{N}$ vs. $\frac{3N-2}{2}$), depending on $\lambda$. Increasing depth continues to push down rank but, unlike before (\cref{eqn:gd_sv}, \cref{eqn:gd_w}), the penalty now allows singular value trajectories to depend on their own magnitudes even without depth (\cref{eqn:gdreg_sv} with $N=1$), providing an additional degree of freedom. The penalty also allows each singular value to depend on its relative weight within the distribution of singular values through $(1 - ||W||_*/||W||_F)$ rather than just its own absolute magnitude. 

In \cref{eqn:gdreg_w}, we also see that the depth-dependent accelerative pre-conditioning $P_W$ now acts on a new term: while the first term can be interpreted as the typical influence of reducing the loss via training and gradient optimization on the network's trajectory, the new term can be interpreted as a spectral-based component that can be used by $P_W$ to further enhance the spectral trajectory of $W$ at higher depths, like in \cref{eqn:gdreg_sv}. Looking at the diagonals, the new term can be seen as a spectrally re-scaled version of $W$ that influences its trajectory in a way that accounts for each singular value's weight relative to its entire spectrum: $\frac{\vb u_i^\top \vb v_i}{\vert \vert W \vert \vert_F ^2} (1 - \frac{\vert \vert W \vert \vert_*}{\vert \vert W \vert \vert_F}  \sigma_{i} )$. 

Empirically, comparing the un-regularized case (\cref{fig:gd} top row) to the regularized case (\cref{fig:gd+r} top row), we see that the penalty helps increase the speed at which rank is reduced, inducing faster rates of rank reduction earlier in training. Unlike in un-regularized gradient descent where deeper networks take longer to exhibit rank reduction, the penalty enables near simultaneous reductions in rank across all depths (\cref{fig:gd+r} top row, middle), making it less dependent on depth. 

\paragraph{Adam} With Adam, the penalty's effect differs considerably in terms of the solutions produced. 
\begin{thm}
    Under the same assumptions, with the proposed penalty, the evolution of the end-product matrix and its singular values can be approximated via the following: 
    \begin{equation}
    \vspace{-2mm}
    \begin{aligned}
    \dot{\sigma} &= - \mathrm{vec}(\vb v_i \vb u_i^\top)^\top P_{W,G} \left(\mathrm{vec} \left(\nabla_W \mathcal{L}(W)\right) + \lambda \frac{\mathrm{vec}(UV^\top - U \tilde{\Sigma} V^\top)}{\vert \vert W \vert \vert_F^2}\right)\\    
    \mathrm{vec}(\dot{W}) &= -P_{W,G}\left(\mathrm{vec} \left(\nabla_W \mathcal{L}(W)\right) + \lambda \frac{\mathrm{vec}(UV^\top - U \tilde{\Sigma} V^\top)}{\vert \vert W \vert \vert_F^2}\right) 
    \end{aligned} \label{eqn:adamreg}
    \end{equation}
\end{thm}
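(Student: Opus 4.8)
The plan is to recycle the derivations behind \cref{thm:adam_W_noreg} and \cref{thm:gd_sv_reg} almost verbatim; the only new ingredient is the elementary observation that the penalty $R(W)=\|W\|_*/\|W\|_F$ depends on the layer weights $W_1,\dots,W_N$ only through the end-product matrix $W=W_N\cdots W_1$, so it enters the layer-wise gradients through exactly the same chain-rule factors as the data-fitting term. Writing $\mathcal{L}_\lambda(W)\coloneqq\mathcal{L}(W)+\lambda R(W)$, for each layer $j$ we have $\nabla_{W_j}\mathcal{L}_\lambda=(W_{j+1}\cdots W_N)^\top\nabla_W\mathcal{L}_\lambda\,(W_1\cdots W_{j-1})^\top=(W_{j+1}\cdots W_N)^\top(\nabla_W\mathcal{L}+\lambda\nabla_W R)(W_1\cdots W_{j-1})^\top$, which is precisely the expression used in the proof of \cref{thm:adam_W_noreg} with $\nabla_W\mathcal{L}$ replaced by $\nabla_W\mathcal{L}+\lambda\nabla_W R$.

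First I would re-run the Adam-as-gradient-flow reduction behind \cref{thm:adam_W_noreg} with $\mathcal{L}_\lambda$ in place of $\mathcal{L}$: approximating the discounted first moment by $\nabla_{W_j}\mathcal{L}_\lambda$ and the discounted second moment by its variance-corrected magnitude, the same vectorization and Kronecker bookkeeping that yields $P_{W,G}$ now yields $\mathrm{vec}(\dot W)=-P_{W,G}^{\lambda}\,\mathrm{vec}(\nabla_W\mathcal{L}_\lambda)=-P_{W,G}^{\lambda}(\mathrm{vec}(\nabla_W\mathcal{L})+\lambda\,\mathrm{vec}(\nabla_W R))$, where $P_{W,G}^{\lambda}$ has the identical Kronecker structure as $P_{W,G}$ but with the diagonal blocks $G_j$ built from $\nabla_{W_j}\mathcal{L}_\lambda$ rather than $\nabla_{W_j}\mathcal{L}$. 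One then either reads the $G_j$ appearing in the statement as formed from this full gradient (nothing further needed), or invokes that in the regime considered ($\lambda$ small, consistent with $\lambda\in[10^{-4},10^{-1}]$ and with the expectation approximations already in force) the penalty contributes only at lower order to the second moment, so $P_{W,G}^{\lambda}\approx P_{W,G}$ and the pre-conditioner appears in the claimed form.

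Next I would import the penalty-gradient computation from the proof of \cref{thm:gd_sv_reg}: under the non-degeneracy/distinct-singular-value assumptions inherited from \cite{Arora:2019ug}, $\|W\|_*$ is differentiable at $W=U\Sigma V^\top$ with gradient $UV^\top$, and $\nabla_W\|W\|_F=W/\|W\|_F$, so the quotient rule gives $\nabla_W R(W)=\frac{UV^\top}{\|W\|_F}-\frac{\|W\|_*}{\|W\|_F^2}\frac{W}{\|W\|_F}$, which after the same normalization bookkeeping as in \cref{eqn:gdreg_w} is written $\frac{\mathrm{vec}(UV^\top-U\tilde\Sigma V^\top)}{\|W\|_F^2}$ with $\tilde\Sigma=\frac{\|W\|_*}{\|W\|_F}\Sigma$. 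Substituting this into $-P_{W,G}(\mathrm{vec}(\nabla_W\mathcal{L})+\lambda\,\mathrm{vec}(\nabla_W R))$ gives the second line of \cref{eqn:adamreg}. For the first line I would use that along the trajectory, under distinct singular values, $\sigma_i$ is differentiable with $\dot\sigma_i=\mathrm{vec}(\vb v_i\vb u_i^\top)^\top\mathrm{vec}(\dot W)$ --- the same projection identity already used in \cref{eqn:adam_svtraj} and \cref{eqn:gdreg_sv} --- and substitute the expression for $\mathrm{vec}(\dot W)$.

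The one genuinely delicate step is the reduction $P_{W,G}^{\lambda}\approx P_{W,G}$: because Adam's second-moment accumulator sees the \emph{total} gradient, the penalty in principle reshapes the diagonal pre-conditioners $G_j$ (both through the squared-gradient term $\nabla_{W_j}\mathcal{L}^2$ and, if its contribution is treated stochastically, through the variance), so one must either fold this into the definition of $G_j$ or argue it is lower order in $\lambda$. Everything else --- the chain-rule factorization across depth, the Kronecker aggregation, the quotient-rule gradient of the Hoyer-type ratio, and the singular-value projection --- is a mechanical splicing of \cref{thm:adam_W_noreg} and \cref{thm:gd_sv_reg}, the only care needed being to track the $\|W\|_F$ normalization so that the penalty term matches the form appearing in \cref{eqn:gdreg_w}.
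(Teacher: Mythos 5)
Your proposal matches the paper's own proof: the paper likewise takes the unregularized Adam dynamics of \cref{thm:adam_W_noreg}, replaces the loss gradient with $\nabla_W\mathcal{L}(W)+\lambda\nabla_W R(W)$ using the same quotient-rule/subgradient computation of $\nabla_W R$ as in \cref{thm:gd_sv_reg}, and obtains the singular-value line via the projection identity $\dot\sigma_i=\mathrm{vec}(\vb v_i\vb u_i^\top)^\top\mathrm{vec}(\dot W)$. The one point where you go beyond the paper is the observation that Adam's second-moment accumulator sees the total gradient, so the diagonal blocks $G_j$ in $P_{W,G}$ are in principle built from $\nabla_{W_j}\mathcal{L}_\lambda$ rather than $\nabla_{W_j}\mathcal{L}$; the paper silently keeps $P_{W,G}$ as defined in the unregularized case, whereas you make the needed reconciliation explicit (redefine $G_j$ with the full gradient, or treat the discrepancy as lower order in $\lambda$), which is a legitimate refinement rather than a divergence in method.
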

\begin{proof}
\vspace{-1mm}
    Follows from \cref{eqn:adam_svtraj} and \cref{eqn:adam_w} with our penalty. See \cref{sec:math} for details.
\end{proof}
\vspace{-1mm}
Empirically, we note a new development: there is a large degree of \textit{depth invariance} as rank is pushed down and low test error is achieved almost independently of depth (\cref{fig:gd+r}, bottom row), even at depth 2 (i.e., a shallow network). Exact rank recovery of the underlying matrix is now possible at all depths, unlike gradient descent, and the networks converge to solutions faster by an order of magnitude. From a shallow network ($N=2$), increasing the depth does not induce any material changes in the solutions produced as the penalized DLNN under Adam produces low-rank solutions that achieve exact rank recovery and low test error faster and better than previous settings. 

Moreover, we see that this combination of Adam with the penalty also retains some of the accelerative effect of Adam. Specifically, we see more favorable generalization properties and smoother development of singular values whose convergence speed is at least an order of magnitude faster than under gradient descent ($10^4 \ \textrm{vs.} \ 10^5$ iterations)---whose effects do not vary much with depth. As the singular values evolve, we see that their paths are relatively robust to depth, exhibiting near-identical behavior with significantly less dependence on depth than before (\cref{fig:gd+r}, right most column).


\subsection{No depth, with penalty}

Given the beneficial depth-invariant effects produced by the penalty under Adam in both deep ($N>2$) and shallow ($N=2$) networks, we now consider its effects in a limiting case: a degenerate depth 1 network (i.e., no depth; $N=1$). It is known \cite{Zhang:2016ve} that gradient descent in this setting converges to the minimum Frobenius ($\ell_2$) norm solution, which does not necessarily induce low-rankedness. As expected, training a depth 1 network with gradient descent fails to produce a well-generalizing or low-rank solution (\cref{fig:lm}, top row) as learning is heavily constrained without depth.

\begin{figure}[htbp!]
    \centering
    \includegraphics[width=.9\linewidth]{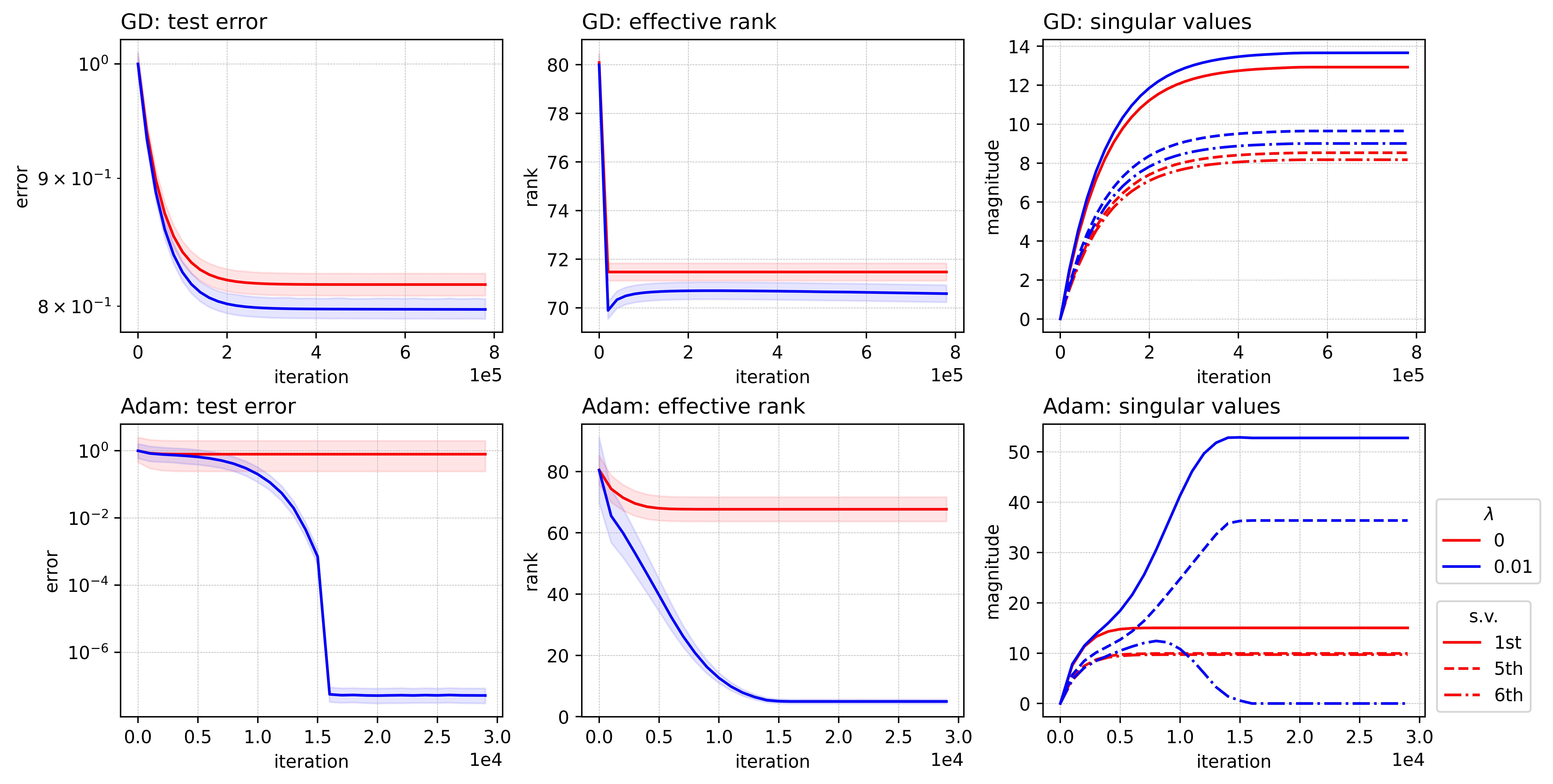}
    \caption{Performance comparison between choice of optimization and regularizer in a depth 1 network. Top row corresponds to gradient descent and bottom corresponds to Adam. Note that $\lambda = 0$ (red line) corresponds to the un-regularized setting. Here, $\lambda = 10^{-2}$ but results hold for values $\lambda \in [10^{-6},10^{-1}]$.} 
    \label{fig:lm}
\end{figure}

Yet, despite being ineffective under gradient descent, the penalty is again effective under Adam (\cref{fig:lm}, bottom row) even without depth, generalizing as well as if not better than deep networks ($N > 2$). We note that replacing our penalty with other proxies of rank or spectral sparsity, like the nuclear norm, does not work (\cref{tbl:diff_optsregs}). As described earlier, under un-regularized gradient flow with no depth, network dynamics collapse as singular value trajectories reduce to $\vb{u}_i^\top \nabla_W \mathcal{L}(W(t)) \vb{v}_i$ and the depth dependent accelerative pre-conditioning vanishes ($P_W = I_{mn}$). We see this empirically (e.g. \cref{fig:lm} top row and \cref{tbl:diff_optsregs}) as solutions from un-regularized gradient descent generalize poorly and fail at rank recovery. In contrast, a depth 1 network trained under Adam with the penalty not only achieves low test error (\cref{fig:lm}, bottom-left), but also recovers the underlying rank of the ground truth matrix---behaving qualitatively like a deep network. As such, we see that the depth invariance of Adam and the penalty in deeper networks also extends to the case of a depth-less degenerate network.

Without depth, a key component that appears to help the network learn under Adam and the penalty is its variance-weighted gradient term $\nabla_W \mathcal{L}(W) \cdot G$, as defined in \cref{eqn:adam_w}, along with the term $P_{W,G}\left(\lambda \frac{\mathrm{vec}(UV^\top - U \tilde{\Sigma} V^\top)}{\vert \vert W \vert \vert_F^2}\right)$ which reduces to $G\left(\lambda \frac{\mathrm{vec}(UV^\top - U \tilde{\Sigma} V^\top)}{\vert \vert W \vert \vert_F^2}\right)$ without depth. Interestingly, the variance of the loss gradient and the ratio $\eta^2 = \mathrm{var}(\nabla_W \mathcal{L}(W))/\nabla_W \mathcal{L}(W)^2$ formed from $\nabla_W \mathcal{L}(W) \cdot G$ resembles an inverse signal-to-noise ratio that both have come up in other works as important quantities that are strongly predictive of generalization capabilities \cite{fantastic} or are essential to finding optimal variance adaption factors for loss gradients \cite{Adam:sgn}.

\subsection{Comparison with other penalties and optimizers}
\begin{table}[h!]
\begin{center}
\scriptsize
\caption{
  Results for rank 5 matrix completion across various optimizer/penalty/depth combinations in terms of test error (\textbf{Err}) and effective rank (\textbf{Rk}, rounded to nearest integer) of the estimated matrix. \textbf{Ratio} denotes our ratio penalty ($||\cdot||_*/||\cdot||_F$), \textbf{Sch p:q} denotes the ratio of two Schatten (quasi)norms ($||\cdot||_{S_p}/||\cdot||_{S_q}$) as penalty, \textbf{Nuc} denotes the nuclear norm penalty, \textbf{None} is no penalty, and $a \cdot e$ $b$ denotes $a \cdot 10^{b}$. Best results---in terms of both absolute test error (lower the better) and rank reduction (closer to 5 the better) as well as depth invariance in terms of error and rank---are in bold. For more results, see \cref{sec:different_opt}.
  }
\begin{tabular}
{l@ {\qquad} c@ {\qquad} cc cc cc cc cc cc}
  \toprule
  \multirow{2}{*}{\raisebox{-\heavyrulewidth}{{Optimizer}}} & \multirow{2}{*}{\raisebox{-\heavyrulewidth}{{Depth}}} & \multicolumn{2}{c}{{Ratio}} & \multicolumn{2}{c}{{Sch $\frac{1}{2}$:$\frac{2}{3}$}}  & \multicolumn{2}{c}{{Sch $\frac{1}{3}$:$\frac{2}{3}$}}  & \multicolumn{2}{c}{{Sch $\frac{1}{3}$:$\frac{1}{2}$}} & \multicolumn{2}{c}{{Nuc}} & \multicolumn{2}{c}{{None}} \\
  \cmidrule{3-14}
  &  & {Err} & {Rk} & {Err} & {Rk} & {Err} & {Rk} & {Err} & {Rk} & {Err} & {Rk} & {Err} & {Rk}\\
  
  \toprule
  \multirow{2}{*}{Adam \cite{Kingma:2015us}} & 1     &  \textbf{4$\bm{e}\bf{\texttt{-}7}$} & \bf{5}  & 0.72 & 33  & 0.80 & 45  & 0.81 & 53   & 0.36 & 6  & 1.00 & 79 \\
                        & 3     & \textbf{4$\bm{e}\bf{\texttt{-}7}$} & \bf{5}  & 3$e\texttt{-}6$ & 5   & 1$e\texttt{-}5$ & 5  & 6$e\texttt{-}6$ & 5   & 0.30 & 5  & 0.04 & 6\\
  \midrule
  \multirow{2}{*}{Adagrad \cite{duchi2011adaptive}} & 1     & 0.58 & 31  & 0.81 & 60  & 0.97 & 32 & 0.79 & 60 & 0.12 & 8 & 0.80 & 70 \\
                           & 3     & 3$e\texttt{-}7$ & 5  & 9$e\texttt{-}7$ & 5 & 1$e\texttt{-}5$ & 5 & 2$e\texttt{-}7$ & 5 & 0.05 & 6 & 4$e\texttt{-}3$ & 6 \\
  \midrule
  \multirow{2}{*}{Adamax \cite{Kingma:2015us}}  & 1 & \textbf{4$\bm{e}\bf{\texttt{-}7}$} & \bf{5} & 0.76 & 44 & 0.85 & 22 & 0.80 & 58 & 0.05 & 6 & 0.81 & 72\\
                           & 3     & \textbf{7$\bm{e}\bf{\texttt{-}7}$} & \bf{5} & 3$e\texttt{-}6$ & 5 & 7$e\texttt{+}5$ & 1 & 6$e\texttt{-}6$ & 5 & 0.07 & 7 & 0.01 & 7\\
  \midrule
  \multirow{2}{*}{RMSProp} & 1     & 2$e\texttt{-}4$ & 6 & 0.08 & 4 & 1.6$e\texttt{+}3$ & 5 & 1.8$e\texttt{+}3$ & 8   & 0.05 & 8 & 0.80 & 70\\
                           & 3     & 0.03 & 5 & 8$e\texttt{-}4$ & 5 & 2$e\texttt{-}3$ & 5 & 1.9 & 5    & 0.05 & 6 & 0.11 & 14\\  
  \midrule
  \multirow{2}{*}{GD}  & 1     & 0.81 & 67 & 0.81 & 62 & 0.80 & 47 & 0.81 & 60    & 0.82 & 59 & 0.83 & 72\\
                            & 3     & 0.51 & 3 & 0.25 & 5 & 0.56 & 3 & 0.39 & 5   & 0.24 & 4 & 1$e\texttt{-}5$ & 5\\  
  \bottomrule
 \end{tabular}
  \label{tbl:diff_optsregs}
  \end{center}
\end{table}
\paragraph{Other Combinations} Our results do not preclude the possibility that other optimizers and penalties can produce similar or better effects. For completeness, and inspired by the properties of our penalty (e.g. ratio-like, spectral, non-convex), we experiment with various optimizers and penalties (\cref{tbl:diff_optsregs}) to compare their interactions across shallow ($N=1$) and deep ($N=3$) settings. We note that our ratio penalty under Adam and its close variant Adamax, both initially proposed in \cite{Kingma:2015us}, are the only combinations that largely show depth invariance in test error and rank recovery whereas others require depth to reduce rank or generalize better. Though the nuclear norm exhibits some depth invariance, it is unable to find well-generalizing solutions and fails in rank reduction under gradient descent where depth is still necessary. Other combinations also fail in enabling a depth 1 network to perform as well as deeper ones. Due to space constraints, we leave a fuller treatment and characterization of each optimizer's inductive bias and its interaction with different regularizers for future work.

\begin{figure}[h!]
  \centering
  \includegraphics[width=1\linewidth]{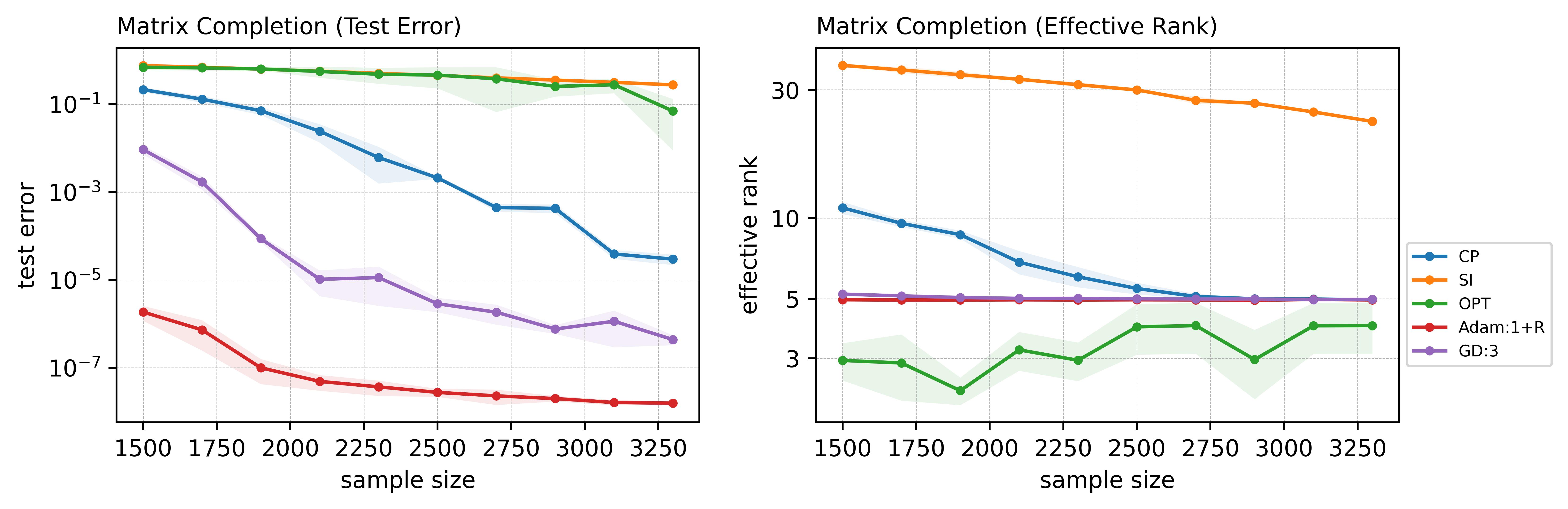}
  \caption{Comparative performance in test error and rank minimization for rank 5 matrix completion. $x$-axis stands for the number of observed entries ($\mathbb{R}^{100 \times 100}$, so out of $100 \times 100 = 10^4$ entries) and shaded regions indicate error bands. \textbf{Adam:1+R} refers to a depth 1 network trained with Adam and penalty, \textbf{CP} is the minimum nuclear norm solution, \textbf{GD:3} is a depth 3 network trained with gradient descent, \textbf{OPT} is \texttt{OptSpace} \cite{keshavan2009gradient}, and \textbf{SI} is \texttt{SoftImpute} \cite{softimpute2010}. To reduce clutter, we omit results with similar performance as \textbf{GD:3} (e.g. GD:4, GD:5). 
  }
  \label{fig:combo}
\end{figure}

\paragraph{Comparative Performance} We also note our penalty's comparative performance (\cref{fig:combo}) against other methodologies for matrix completion across various sample sizes (i.e., the amount of observed entries, uniformly sampled, made available for training). A depth 1 network with Adam and the penalty (\cref{fig:combo}, \textbf{Adam:1+R}, red line) outperforms all other methods including an un-regularized DLNN in terms of test error and degree of rank compression/recovery across varying sample sizes. Even at lower sample sizes, the depth 1 network generalizes better than methods such as \texttt{SoftImpute} \cite{softimpute2010}, \texttt{OptSpace} \cite{keshavan2009gradient}, the minimum nuclear norm solution \cite{candes2010power}, and DLNNs trained with gradient descent ($N\geq3$) by at least an order of magnitude. It also outperforms other methods across various data regimes from small sample sizes to large ones, improving as the sample size grows.

\section{Results on real data}
\label{sec:empirical}

\begin{table}[h]
\scriptsize
  \caption{Performance evaluations on MovieLens100K. Results are categorized by model, whether additional data or features (e.g. external/side information, implicit feedback, graph features, etc.) beyond the explicit ratings in the interaction matrix are used, and test error as measured by root mean squared error (RMSE, lower is better) on different splits in \textbf{(a)} and \textbf{(b)}. Since various approaches use different train-test proportions, results \cite{graphrec,2018GraphConvMatComp} on two common splits are included. Results from using Adam with the penalty are in bold.}
\subfloat[Performance on 90:10 (90\%) train-test split]
{\begin{tabular}{l@{\qquad}c@{\qquad}r}
  \toprule
  \multirow{5}{*}{\raisebox{-\heavyrulewidth}{Model}} &
  \multirow{2}{*}{{Uses side info,}} \\ &
  \multirow{2}{*}{{add. features, or}} \\ & 
  \multirow{2}{*}{{other info, etc?}} &   
  \multicolumn{1}{c}{$90\%$} \\
  \cmidrule{3-3} 
  &  & \hspace{1.5mm} RMSE \\
  \toprule
  {Depth 1 LNN}                          &  No \\
  \hspace{3mm} w. GD                     &  &  2.814  \\
  \hspace{3mm} w. GD+penalty             &  &  2.808  \\
  \hspace{3mm} w. Adam                   &  &  1.844  \\
  \hspace{3mm} \textbf{w. Adam+penalty}   &  & \textbf{0.915}  \\
  \midrule
  {User-Item Embedding}                 &  No \\
  \hspace{3mm} w. GD                    &  & 2.453 \\
  \hspace{3mm} w. GD+penalty            &  & 2.535 \\
  \hspace{3mm} w. Adam                  &  & 1.282 \\
  \hspace{3mm} \textbf{w. Adam+penalty}  &  & \textbf{0.906} \\
\midrule
  {NMF \cite{pmf}}                      &  No  & 0.958  \\
  {PMF \cite{pmf}}                      &  No  & 0.952  \\
  {SVD++ \cite{svd}}                    &  Yes  & 0.913 \\
  {NFM \cite{nfm}}                      &  No  & 0.910 \\  
  {FM \cite{fm}}                        &  No  & 0.909 \\
  {GraphRec  \cite{graphrec}}           &  No  & 0.898 \\
{AutoSVD++ \cite{autosvd2}}             &  Yes  & 0.904 \\
  {GraphRec+sidefeat.\cite{graphrec}} &  Yes  & 0.899 \\
{GraphRec+graph/side feat.\cite{graphrec}}  &  Yes  & 0.883 \\
  \bottomrule
 \end{tabular}
}
 \quad
\subfloat[Performance on 80:20 (80\%) train-test split]
{
\begin{tabular}{l@{\qquad}c@{\qquad}r}
  \toprule
  \multirow{5}{*}{\raisebox{-\heavyrulewidth}{Model}} & \multirow{2}{*}{{Uses side info,}} \\ &
  \multirow{2}{*}{{add. features, or}} \\ &
\multirow{2}{*}{{other info, etc?}} &  
  \multicolumn{1}{c}{$80\%$}  \\
  \cmidrule{3-3} 
&  &  \hspace{1.5mm}RMSE \\
  \toprule
  {Depth 1 LNN}                           & No \\
  \hspace{3mm} w. GD                      &     & 2.797  \\
  \hspace{3mm} w. GD+penalty              &     & 2.821   \\
  \hspace{3mm} w. Adam                    &     & 1.822  \\
  \hspace{3mm} \textbf{w. Adam+penalty}   &     & \textbf{0.921}  \\
  \midrule
  {User-Item Embedding}  &  No \\
  \hspace{3mm} w. GD           &     & 2.532  \\
  \hspace{3mm} w. GD+penalty   &     & 2.519  \\
  \hspace{3mm} w. Adam         &     & 1.348  \\
  \hspace{3mm} \textbf{w. Adam+penalty}   &  & \textbf{0.919}  \\
\midrule
\\
  \hspace{.5mm}{IMC \cite{imc1,imc2}}            &  Yes   & 1.653  \\
  \hspace{.5mm}{GMC \cite{Bronstein2014matCompGraph}}                         &  Yes  & 0.996  \\
  \hspace{.5mm}{MC \cite{candes2009exact}}                              &  Yes   & 0.973  \\
  \hspace{.5mm}{GRALS \cite{Rao2015collabFilterGraph}}                           &  Yes   & 0.945  \\
  \hspace{.5mm}{sRGCNN (sRMGCNN) \cite{Bronstin2017GeoMatComp}}      &  Yes   & 0.929  \\
  \hspace{.5mm}{GC-MC \cite{2018GraphConvMatComp}}                           &  Yes  & 0.910  \\
  \hspace{.5mm}{GC-MC+side feat. \cite{2018GraphConvMatComp}}                &  Yes  & 0.905  \\
 \\
  \bottomrule
 \end{tabular}
 }
  \label{tbl:mlens100k_results}
\end{table}
Lastly, a natural question might be: how well do our results extend to real-world data? To answer this, we consider MovieLens100K \cite{movieLensData}---a common benchmark used to evaluate different approaches for recommendation systems. It consists of ratings from 944 users on 1,683 movies, forming an interaction matrix $M \in \mathbb{R}^{944 \times 1683}$ where the goal is to predict the rest of $M$ after observing a subset.

Unlike our earlier experiments, the values here are discrete in the range $\{1,2,3,4,5\}$ and $M$ is of high, or near full, rank. Given these differences and more conventional empirical approaches in recommendation systems, we apply our penalty in two ways. The first way is as before: training a depth 1 network with Adam and the penalty (Depth 1 LNN, \cref{tbl:mlens100k_results}). The second way is to impose our penalty on a classic user-item embedding model (User-Item Embedding, \cref{tbl:mlens100k_results} \cite{explicitmodel}) that combines user-specific and item-specific biases with a dot product between a latent user and latent item embedding; we apply our penalty separately on either the item or the user embedding layer. Though approaches solely utilizing explicit ratings have fallen out of favor in lieu of ones incorporating additional information and complex designs (e.g. graph-based, implicit feedback, deep non-linear networks), we nonetheless evaluate the effects of our penalty within this simple framework. We compare the results from these two approaches with a variety of approaches that use specialized architectures, deep non-linear networks, additional side information, etc., beyond $M$.

From \cref{tbl:mlens100k_results}, we see that Adam and the penalty (\textbf{w. Adam+penalty}) can improve performance over the baseline of gradient descent (GD) or Adam alone. Surprisingly, a depth 1 network with Adam and the penalty can outperform or come close to other more specialized approaches despite its simplicity; however, in contrast to the other methods, it does so without any specialized or additional architectures (e.g. helper models/networks), external information beyond $M$ (e.g. implicit feedback, side information), construction of new graphs or features, non-linearites, higher-order interactions (e.g. factorization machines), or---for the depth 1 case---even any factorization at all. More precise tuning (e.g. better initialization, learning schedules) or usage of other information/features may yield further improvements on this task and others that involve matrix completion or factorization \cite{2020MEQLearn, 2019MEnet, 2019MatFactNLPEmbed}. We leave these for fuller evaluation and further study in future work.

\section{Discussion}
\label{sec:discuss}
The dynamics of optimization trajectories---induced together by the choice of optimizer, parameterization, loss function, and architecture---can play an important role in the solutions produced and their ability to generalize. Depth and gradient descent-like algorithms have been key ingredients to the successes of deep learning. On matrix completion/factorization, the proposed penalty helps produce well-generalizing solutions and perfect rank recovery even with a degenerate depth 1, or depth-less, network. Does that mean our penalty, together with Adam's own inductive bias, is producing an effect similar to implicit regularization under gradient descent with depth, but better? 


We suspect not. While we concur with the conjecture in \cite{Arora:2019ug}---namely, a reductionist view which suggests that implicit regularization can be entirely encapsulated by an explicit norm-based penalty is likely an incorrect over-simplification---we believe that there is merit in studying both implicit and explicit forms of regularization to examine their interplay. Our work suggests that we may be able to partially replicate the successes of \textit{deep} learning by selectively combining optimization methods with explicit penalties via better model design or encoding of inductive biases, but this remains unclear.  

Many questions remain open from our limited analyses which, due to space considerations, we leave for future work. For instance, how well do lessons from DLNNs translate to their non-linear counterparts or other tasks (e.g. classification)? How does this relate to learning regimes with larger learning rates or discrete trajectories (i.e., beyond gradient flow)? A more rigorous analysis of the properties (e.g. convergence) of Adam, adaptive gradient methods, and other optimizers in the presence of explicit regularizers may better our understanding. It remains unclear whether implicit regularization is a bug or a feature in deep over-parameterized networks. Nonetheless, our findings suggest the possibility that it can be harnessed and transformed to desirable effect. 

\acksection
We thank Kellin Pelrine for his feedback and Derek Feng for his help with a previous collaboration on which this work builds. We also thank the anonymous reviewers for their valuable comments and feedback throughout the process.

\bibliographystyle{plainnat}
\bibliography{bib, custom}

\section*{Checklist}

\begin{enumerate}

\item For all authors...
\begin{enumerate}
  \item Do the main claims made in the abstract and introduction accurately reflect the paper's contributions and scope?
    \answerYes{}
  \item Did you describe the limitations of your work?
    \answerYes{See \cref{sec:findings}, \cref{sec:discuss}, and Appendix.}
  \item Did you discuss any potential negative societal impacts of your work?
    \answerNA{Our work does not present any foreseeable societal impacts or consequences.}
  \item Have you read the ethics review guidelines and ensured that your paper conforms to them?
    \answerYes{}
\end{enumerate}

\item If you are including theoretical results...
\begin{enumerate}
  \item Did you state the full set of assumptions of all theoretical results?
    \answerYes{See \cref{sec:findings} and Appendix.}
        \item Did you include complete proofs of all theoretical results?
    \answerYes{See \cref{sec:findings} and Appendix.}
\end{enumerate}

\item If you ran experiments...
\begin{enumerate}
  \item Did you include the code, data, and instructions needed to reproduce the main experimental results (either in the supplemental material or as a URL)?
    \answerYes{See supplementary material.} 
  \item Did you specify all the training details (e.g., data splits, hyperparameters, how they were chosen)?
    \answerYes{See Appendix.}
        \item Did you report error bars (e.g., with respect to the random seed after running experiments multiple times)?
    \answerNA{Results under our proposed penalty are valid even across varying degrees of regularization strength ($\lambda$) and several initial learning rates $\alpha$ and across several initializations. Other results that were run with random seeds had negligible standard errors (see implementation details under the Appendix). 
    }
        \item Did you include the total amount of compute and the type of resources used (e.g., type of GPUs, internal cluster, or cloud provider)?
    \answerYes{See Appendix.}
\end{enumerate}

\item If you are using existing assets (e.g., code, data, models) or curating/releasing new assets...
\begin{enumerate}
  \item If your work uses existing assets, did you cite the creators?
    \answerNA{}
  \item Did you mention the license of the assets?
    \answerNA{}
  \item Did you include any new assets either in the supplemental material or as a URL?
    \answerNA{}
  \item Did you discuss whether and how consent was obtained from people whose data you're using/curating?
    \answerNA{}
  \item Did you discuss whether the data you are using/curating contains personally identifiable information or offensive content?
    \answerNA{}
\end{enumerate}

\item If you used crowdsourcing or conducted research with human subjects...
\begin{enumerate}
  \item Did you include the full text of instructions given to participants and screenshots, if applicable?
    \answerNA{}
  \item Did you describe any potential participant risks, with links to Institutional Review Board (IRB) approvals, if applicable?
    \answerNA{}
  \item Did you include the estimated hourly wage paid to participants and the total amount spent on participant compensation?
    \answerNA{}
\end{enumerate}

\end{enumerate}

\clearpage

\appendix
\section{Implementation, Further Experiments, \& Derivations}
\label{sec:implementation+experiments}
\vspace{-4mm}
\subsection{Implementation}
\vspace{-2mm}
\label{sec:implementation_details}
In this section, we provide the implementation details behind our experiments. We use \texttt{PyTorch} for implementing the linear neural network, \texttt{CVXPY} for finding the minimum nuclear norm solution, the \texttt{R} package \texttt{ROptSpace} for running the \texttt{OptSpace} algorithm, and the \texttt{Python} package \texttt{fancyImpute} for running the \texttt{SoftImpute} algorithm. Additionally, for experiments on MovieLens100k, we use the \texttt{Spotlight} repo \footnote{(\href{https://github.com/maciejkula/spotlight}{https://github.com/maciejkula/spotlight})} for different kinds of recommendation models in \texttt{PyTorch} and modify its common building blocks for our purposes. Experiments were run on a NVIDIA Tesla V100 GPU.

\paragraph{Synthetic Data} For synthetic data experiments, our implementation details mirror the experimental design of \cite{Arora:2019ug}, which we briefly detail. When referring to a random rank $r$ matrix with size $m \times n$, we mean a product $UV^\top$, where the entries of $U \in \mathbb{R}^{m \times r}$ and $V \in \mathbb{R}^{n \times r}$ are drawn independently from a standard normal distribution. The observed entries are uniformly sampled at random without repetition. The goal is to recover the underlying matrix by observing a portion of it and trying to recover the remaining entries as well as producing a solution matrix whose overall rank is low or is close to the rank of the underlying ground truth matrix. During training, deep linear neural networks are trained by gradient descent, or whichever other optimizer under consideration, under the Frobenius loss (i.e. $\ell_2$ loss).

Weights are initialized via independent samples from a Gaussian distribution with zero mean and standard deviation $10^{-3}$ as in \cite{Arora:2019ug}. Learning rates are fixed through training; in line with previous work \cite{Arora:2019ug} and the assumptions of gradient flow, we set our initial learning rates (for gradient descent, Adam, and our other optimizers) $\alpha = 10^{-3}$ for the results shown in this paper, but we also conducted the same experiments with $\alpha \in \{5 \cdot 10^{-4}, 10^{-4}\}$ and saw no qualitative differences. During training, stopping criteria consist of either reaching a total of $5 \cdot 10^5$ iterations or training loss reaching values below $10^{-7}$. Results relating to the ratio penalty are robust across varying strengths of regularization $\lambda \in [10^{-4},10^{-1}]$.
For Adam, aside from the initial learning rate, the default hyper-parameters under PyTorch's implementation were used, such as $(\beta_1, \beta_2) = (0.9, 0.999)$, no \texttt{amsgrad}, etc. unless otherwise stated. Similarly for experiments with other optimizers, aside from the initial learning rate, we use the default hyper-parameters unless otherwise specified.
Test error is measured via mean-squared error, unless stated otherwise. To quantify the rank of solutions produced, we use the effective rank measure \cite{roy2007effective} due to its favorable properties over numerical rank in simulation settings and its usage in \cite{Arora:2019ug}. The test error with respect to a ground truth matrix $W^\star$ is given by the mean-squared error (i.e., $\frac{1}{N}\norm{W - W^\star}_{F}^2$) but results continue to hold under similar measures of test error. For rank 5 experiments where the sample size was fixed, sample size is set at 2000 (again, following \cite{Arora:2019ug}) but results are similar for sample sizes at 2500, 3000, etc. The sample size for rank 10 experiments were set at 3000 and 3500.

\paragraph{MovieLens100k} For our experiments on MovieLens100k, we use the base explicit factorization model available in \texttt{Spotlight} in conjunction with Adam and our ratio penalty. The explicit factorization model resembles a shallow factorization but includes several notable differences: an user-embedding layer, an item-embedding layer, an user-specific bias, and an item-specific bias. With our penalty applied to either the user-embedding layer or the item-embedding layer, we evaluate values of $\lambda \in [10^{-6}, 10^{-1}]$, embedding dimensions in $\{2,4,8,16,32,64,128\}$, and batch-sizes of $\{128,256,512,1024\}$
for initial learning rates $\alpha \in [5 \cdot 10^{-3},  10^{-5}]$. Each model configuration is trained for 100 epochs and evaluated on the full test set at the end. The best model is check-pointed and training is terminated if training error begins increasing for three epochs. The best performing model with the reported test error was with learning rate $\alpha = 10^{-3}$, batch size 256, effective dimension 64, $\lambda_{\rm{user}} = 0.01$ while another was $\alpha = 10^{-3}$, batch size 128, effective dimension 64, $\lambda_{\rm{user}} = 10^{-4}$ and $\lambda_{\rm{item}} = 10^{-1}$. For the depth 1 LNN, we take an approach near-identical to our synthetic experiments but include a bias term that is added to our single weight matrix. We explore values for regularization strengths $\lambda \in [2\cdot 10^{1}, 10^{-6}]$ and initial learning rate $\alpha \in [5 \cdot 10^{-3},  10^{-5}]$. The best performing model with the reported test error was with learning rate $\alpha = 5 \cdot 10^{-4}$ and $\lambda = 1.5$ though many similar configurations came close to the same performance. A more rigorous sweep of configurations is necessary to exhaustively find similar, or the best, configurations; due to space considerations, we leave more in-depth evaluations for future work.


\subsection{Rank 10 Matrix Completion}
\label{sec:rank10}

The following details the same set of experiments as those in \cref{sec:findings} but for rank 10 matrices as an example to illustrate that our results generalize beyond rank 5.
\begin{figure}[h!]
  \centering
  \includegraphics[width=1\linewidth]{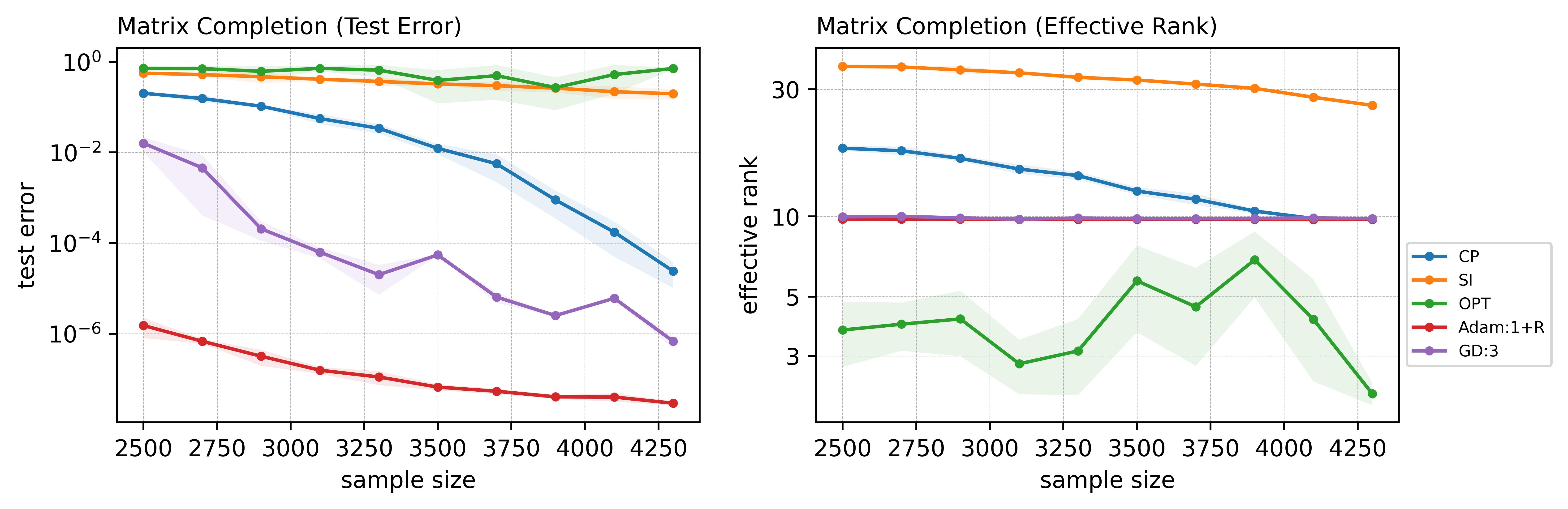}

  \caption{Comparative performance in generalization error and rank minimization for rank 10 matrix completion (of size $100 \times 100$).
  $x$-axis stands for the number of observed entries (uniformly sampled) and shaded regions indicate standard error bands. \textbf{Adam:1+R} refers to Adam at depth 1 with the penalty, \textbf{CP} is the minimum nuc. norm solution, \textbf{GD:3} is gradient descent with depth 3 (deep matrix factorization), \textbf{OPT} is \texttt{OptSpace} \cite{keshavan2009gradient}, and \textbf{SI} is \texttt{SoftImpute} \cite{softimpute2010}. To reduce clutter, we omit results with similar performance to GD:3 (e.g. GD:4/5 etc.).
  }
  \label{fig:combo_r10}
\end{figure}

Overall, comparative performance results very much resemble those in \cref{fig:combo} for rank 5 completion. As seen in \cref{fig:combo_r10}, our penalty's comparative performance relative to other techniques remains unchanged from rank 5. Like before, Adam with penalty, at depth 1, outperforms all other approaches across all data regimes by at least two orders of magnitude, with reconstruction error decreasing even further as sample size is increased. A similar result is seen in terms of its performance on rank: it reduces rank to the point of exact rank recovery independently of sample size.

\begin{figure}[htbp!]
    \centering
    \includegraphics[width=.9\linewidth]{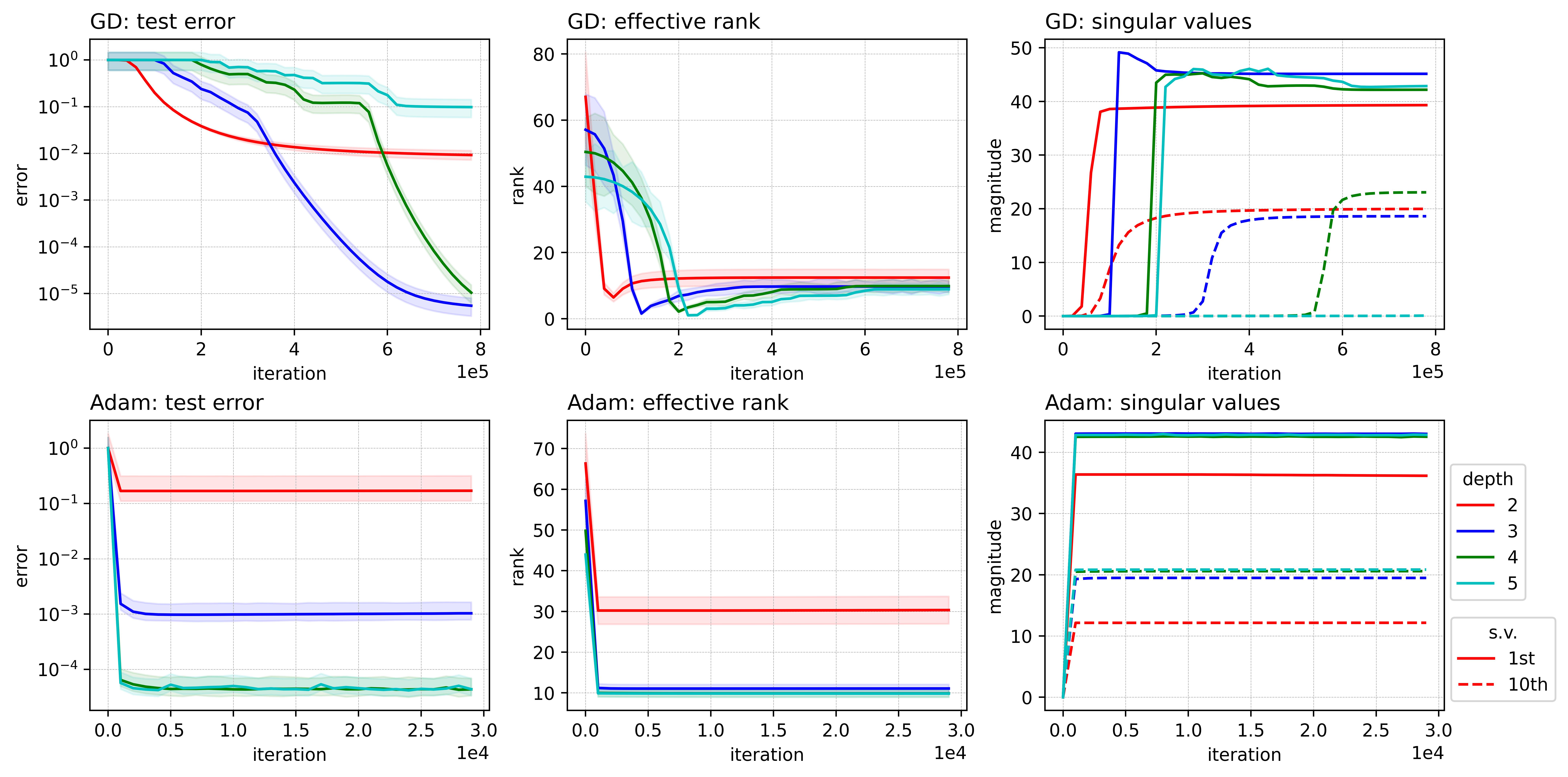}
    \caption{Dynamics of gradient descent and Adam over DLNN. Plots above show the performance of gradient descent over networks of depths 2/3/4/5 for rank 10 matrices of size $100 \times 100$. The top row depicts gradient descent and the bottom depicts Adam. The left column depicts generalization error as a function of depth and training iterations. The middle column depicts the change in effective rank across depths and over training iterations. The right column shows the 1st and 10th largest singular values for each depth across training iterations. For singular values, within each depth level (colored lines), a solid line indicates the 1\textsuperscript{st} largest singular value while a dotted line indicates the 10\textsuperscript{th} largest. We omit the other singular values for clarity due to their small magnitude.}
    \label{fig:gd_r10}
\end{figure}

Extending our experiments to rank 10 matrix completion, we see extreme similarity with  results under rank 5 (\cref{fig:gd_r10}). For singular values, we focus on the 1st and 10th largest values in accordance with the changed rank setting (rank 5 to rank 10) to better illustrate the differential behavior of the most prominent values. Our overall results remain the same as in \cref{sec:findings}.
\begin{figure}[h!]
    \centering
    \includegraphics[width=.9\linewidth]{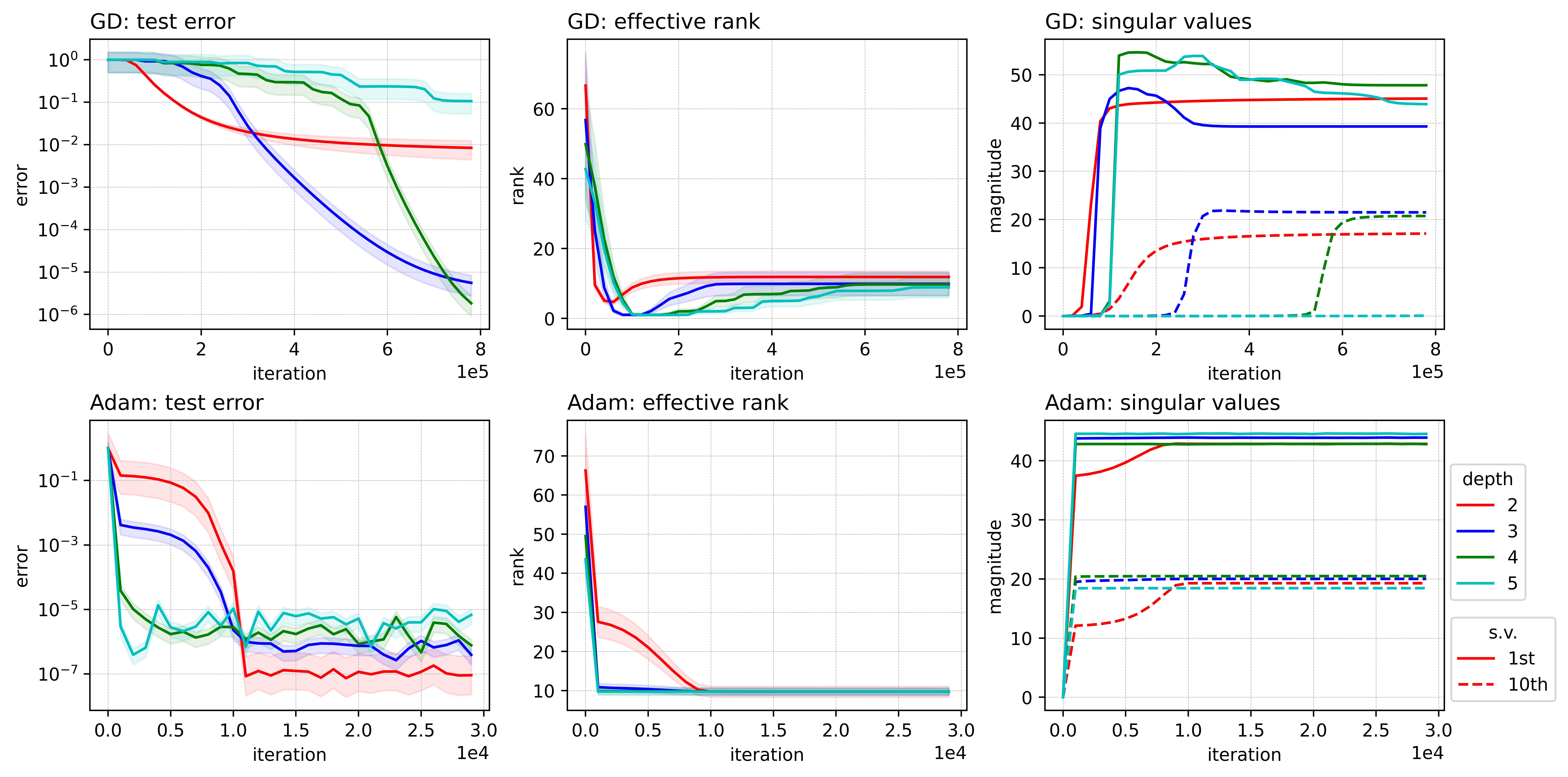}
    \caption{Dynamics of regularized gradient descent and regularized Adam with our penalty. Plots show the performance over networks of depths 2/3/4/5 for rank 10 matrix completion. The penalty's hyper-parameter is set at $\lambda = 10^{-4}$ but results hold for a range of values $\lambda \in [10^{-6},10^{-1}]$.
    }
    \label{fig:gd+r_r10} 
\end{figure}

For regularized gradient descent and Adam (\cref{fig:gd+r_r10}), results again resemble the rank 5 case. Under gradient descent, the proposed penalty induces only slight effects on quickening and tightening the convergence behavior of singular values and test error. In contrast, we see nearly identical effects of our penalty under Adam when compared to the rank 5 case. We also see that, like in rank 5, depth 2 becomes effective in generalizing and reducing rank when compared to depth 2 under Adam alone; in depth 2, our proposed penalty is able to push its singular value trajectories of the 1\textsuperscript{st} and 10\textsuperscript{th} largest singular values towards the other respective singular values in higher depth levels as if getting depth 2 to behave similarly as higher depth levels in contrast to un-penalized Adam.


\begin{figure}[h!]
    \centering
    \includegraphics[width=.9\linewidth]{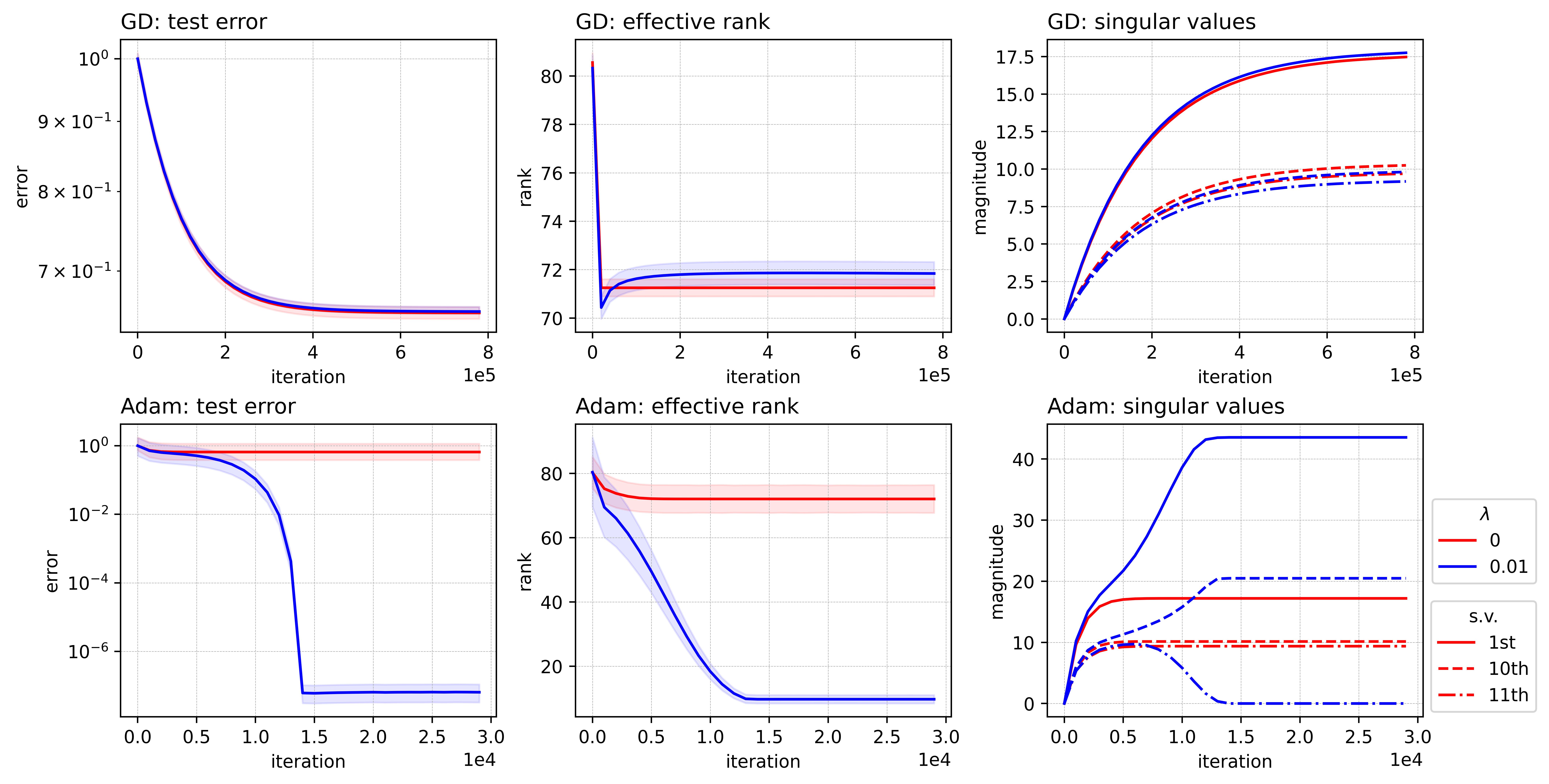}
    \caption{Performance comparison between choice of optimization and regularizer in a depth-1 linear network. Top row corresponds to gradient descent and bottom corresponds to Adam. Note that $\lambda = 0$ corresponds to the un-regularized setting.
    The penalty's hyper-parameter is set at $\lambda = 0.01$ but results hold for a range of values $\lambda \in [10^{-4},10^{-1}]$. For illustrative emphasis, the 11th largest singular value (dashed line) is also included here to depict the trajectory difference under penalized gradient descent versus penalized Adam for depth 1.}
    \label{fig:lm_r10}
\end{figure}

\subsection{Comparing Different Optimizers \& Penalties}
\label{sec:different_opt}

In this section, we expand upon our experiments evaluating different combinations of optimizers and explicit penalties beyond those in \cref{tbl:diff_optsregs}. In \cref{tbl:diff_optsregs_more1} and \cref{tbl:diff_optsregs_more2}, we include other optimizers and evaluate them against several explicit penalties. For all optimizer choices, hyper-parameters, aside from the initial learning rate which is set to $10^{-3}$, are set to their default values in $\texttt{PyTorch}$ unless otherwise specified (e.g. default momentum values for GD). For all experiments evaluating optimizer-penalty combinations, we try $\lambda \in [10^{-6}, 10^{-1}]$ which all delivered similar results, especially in terms of observed depth invariance (or lack thereof) and generalization/rank reduction performance. The results shown in \cref{tbl:diff_optsregs} in \cref{sec:different_opt} as well as in \cref{tbl:diff_optsregs_more1} and \cref{tbl:diff_optsregs_more2} here have $\lambda = 0.05$. Results shown are trained on a sample size of 2000 entries (out of a $100 \times 100$ matrix); results show no meaningful difference on other sample sizes (2500, 3000).

\begin{table}
\begin{center}
\scriptsize
\caption{%
  Additional results for rank 5 matrix completion across various optimizer/penalty/depth combinations in terms of test error (\textbf{Err}) and effective rank (\textbf{Rk}, rounded to nearest integer) of resulting solutions at convergence. \textbf{Ratio} denotes our ratio penalty ($||\cdot||_*/||\cdot||_F$), \textbf{Sch p:q} denotes the ratio of two Schatten (quasi)norms ($||\cdot||_{S_p}/||\cdot||_{S_q}$) as penalty, \textbf{Nuc} denotes nuclear norm penalty, and \textbf{None} denotes no penalty.
  }
\begin{tabular}{l@{\qquad}c@{\qquad} cc cc cc cc cc cc}
  \toprule
  \multirow{2}{*}{\raisebox{-\heavyrulewidth}{\textbf{Optimizer}}} & \multirow{2}{*}{\raisebox{-\heavyrulewidth}{\textbf{Depth}}} & \multicolumn{2}{c}{\textbf{Ratio}} & \multicolumn{2}{c}{\textbf{Sch $\frac{1}{2}$:$\frac{2}{3}$}}  & \multicolumn{2}{c}{\textbf{Sch $\frac{1}{3}$:$\frac{2}{3}$}}  & \multicolumn{2}{c}{\textbf{Sch $\frac{1}{3}$:$\frac{1}{2}$}} & \multicolumn{2}{c}{\textbf{Nuc}} & \multicolumn{2}{c}{\textbf{None}} \\
  \cmidrule{3-14}
  &  & \textbf{Err} & \textbf{Rk} & \textbf{Err} & \textbf{Rk} & \textbf{Err} & \textbf{Rk} & \textbf{Err} & \textbf{Rk} & \textbf{Err} & \textbf{Rk} & \textbf{Err} & \textbf{Rk}\\
  
  \toprule

Adam \cite{Kingma:2015us} (w. amsgrad)
& 1     & 0.83 & 29    & 0.99 & 53   & 1.00 & 61   & 1.00 & 60    & 0.34 & 6 
& 1.00 & 79\\
& 3     & 9$e\texttt{-}$3 & 5     & 0.05 & 6    & 0.04 & 6    & 0.10 & 6     & 0.32 & 6
& 0.06 & 6\\
\midrule     
  \multirow{2}{*}{Adadelta \cite{adadelta}}  
  & 1     & 0.98 & 59   & 1.00 & 63   & 0.98 & 37   & 1.00 & 53    & 0.77 & 21   & 1.01 & 74\\
& 3     & 3$e\texttt{-}$3 & 5     & 2$e\texttt{-}$4 & 5   & 3$e\texttt{-}$3 & 5   &   2$e\texttt{-}$3 & 5   & 0.29 & 5   & 5$e\texttt{-}$3 & 5\\                
\midrule
GD (w. momentum)
& 1   & 0.80 & 70    & 0.79 & 64   & 0.77 & 47   & 0.80 & 63 
      & 0.78 & 54   & 0.81 & 69\\
& 3   & 3$e\texttt{-}$3 & 5     & 1$e\texttt{-}$4 & 5    & 0.43 & 4    & 0.07 & 5
      & 0.02 & 5   & 0.07 & 4\\

\midrule
  \multirow{2}{*}{AdamW \cite{adamw}}  & 1     & 1$e\texttt{-}$3 & 5   & 0.98 & 45   & 1.00 & 49    
  & 1.00 & 64    & 0.34 & 6   & 1.00 & 79\\
                            & 3     & 1$e\texttt{-}$3 & 5   & 2$e\texttt{-}$4 & 5   & 0.01 & 5    
                            & 3$e\texttt{-}$2 & 5   & 0.31 & 6   & 0.02 & 5\\                                                          
\midrule                            
  \multirow{2}{*}{NAdam \cite{nadam}}  & 1     & 1$e\texttt{-}$3 & 5    & 0.97 & 43 &    0.99 & 54   
  & 0.99 & 61    & 0.32 & 6   & 1.01 & 79\\
                            & 3     & 2$e\texttt{-}$3 & 5    & 1$e\texttt{-}$4 & 5 &    0.05 & 5   
                            & 0.01 & 5   & 0.30 & 5   & 0.02 & 5\\                                     

\midrule
\multirow{2}{*}{RAdam \cite{radam}}  & 1   & 1$e\texttt{-}$3 & 5 &   0.95 & 46   & 0.99 & 51   & 1.00 & 62      & 0.25 & 6   & 1.00 & 80\\
                          & 3   & 7$e\texttt{-}$4 & 5 &   5$e\texttt{-}$4 & 5   & 0.01 & 5   & 0.01 & 5   
    & 0.18 & 5   & 0.05 & 6\\                                     
                            
  \bottomrule
 \end{tabular}
\label{tbl:diff_optsregs_more1}
\end{center}
\end{table}

\begin{table}[h]
\begin{center}
\scriptsize
 \caption{
  Additional results for rank 5 matrix completion across various optimizer/penalty/depth combinations in terms of test error (\textbf{Err}) and effective rank (\textbf{Rk}, rounded to nearest integer) of resulting solutions at convergence. \textbf{Sch p} denotes the Schatten (quasi)norm ($||\cdot||_{S_p}$) as penalty.
  }
\begin{tabular}{l@{\qquad}c@{\qquad} cc cc cc}
  \toprule
  \multirow{2}{*}{\raisebox{-\heavyrulewidth}{\textbf{Optimizer}}} &
  \multirow{2}{*}{\raisebox{-\heavyrulewidth}{\textbf{Depth}}} & \multicolumn{2}{c}{\textbf{Sch$\frac{1}{3}$}} & \multicolumn{2}{c}{\textbf{Sch$\frac{1}{2}$}} & \multicolumn{2}{c}{\textbf{Sch$\frac{2}{3}$}}  \\
  \cmidrule{3-8}
  &  & \textbf{Err} & \textbf{Rk} & \textbf{Err} & \textbf{Rk} & \textbf{Err} & \textbf{Rk} \\
  
  \toprule
  Adam \cite{Kingma:2015us} \\
  \hspace{3mm} w. amsgrad
                        & 1     & 0.99 & 69   & 0.97 & 11   & 0.55 & 7 \\
                        & 3     & 1.00 & 5    & 1.00 & 1   & 0.80 & 2\\
 \hspace{3mm} w.o. amsgrad
                        & 1     & 1.00 & 50   & 0.94 & 5   & 0.10 & 5\\
                        & 3     & 0.88 & 1   & 0.84 & 1   & 0.17 & 4\\ 
     
  \midrule
  \multirow{2}{*}{Adagrad \cite{duchi2011adaptive}}  & 1     & 1.00 & 12   & 0.97 & 2    & 0.70 & 15\\
                            & 3     & 0.81 & 1    & 0.85 & 1     & 0.08 & 5\\
  \midrule
  \multirow{2}{*}{Adamax \cite{Kingma:2015us}}  & 1     & 1.00 & 54   & 0.87 & 1    & 1.00 & 41 \\
                           & 3     & 1.00 & 23   & 0.85 & 1     & 1.00 & 4 \\
  \midrule
  \multirow{2}{*}{RMSProp} & 1     & 1.04 & 53   & 1.00 & 40     & 0.10 & 7 \\
                           & 3     & 1.00 & 8   & 0.74 & 1       & 0.08 & 5 \\  
  \midrule
  GD \\
 \hspace{3mm}      w. momentum         
                           & 1     & 266.7 & 43   & 2.14 & 29   & 0.65 & 16 \\
                           & 3     & 1.00 & 25    & 1.00 & 17    & 0.41 & 3\\  
\hspace{3mm}       w.o. momentum
                           & 1     & 67.6 & 45   & 0.94 & 3   & 0.77 & 19 \\
                           & 3     & 1.10 & 34    & 0.99 & 1    & 0.58 & 2\\  
\midrule
  \multirow{2}{*}{Adadelta \cite{adadelta}}  & 1     & 0.99 & 60   & 1.05 & 48   & 1.00 & 60\\
                             & 3     & 0.99 & 56    & 0.94 & 1    & 0.99 & 56\\                             
\midrule
  \multirow{2}{*}{AdamW \cite{adamw}}  & 1     & 1.00 & 48   & 0.97 & 9   & 0.67 & 6\\
                          & 3     & 1.00 & 10    & 0.99 & 1    & 0.82 & 2 \\        
\midrule                            
  \multirow{2}{*}{NAdam \cite{nadam}}  & 1     & 1.00 & 47   & 0.98 & 8   & 0.80 & 7\\
                          & 3     & 1.00 & 3    & 0.92 & 1    & 0.71 & 3\\                                    
\midrule
\multirow{2}{*}{RAdam \cite{radam}}  & 1     & 1.00 & 46    & 1.01 & 25    & 0.57 & 7\\
                        & 3     & 0.99 & 7     & 0.91 & 1     & 0.68 & 3 \\             
   
  \bottomrule
 \end{tabular}
 \label{tbl:diff_optsregs_more2}
 \end{center}
\end{table}

We note that Adam-like variants such as AdamW, NAdam, and RAdam also, unsurprisingly, exhibit varying degrees of depth invariance with respect to both generalization error and rank reduction with our ratio penalty though their solutions fail to generalize as well as Adam and Adamax under the penalty---this makes sense as Adam and Adamax are much more closely related, and both proposed in the same paper with only slight differences in their update rules, than subsequent variants that have significantly different iterative updates (see \cite{adamw, nadam, radam, Kingma:2015us} for further details).

\subsection{Supplemental Derivations}
\label{sec:math}
Here, we outline and detail the derivations behind some of our work describing the trajectory of the end-product matrix $W$ and its singular values $\sigma_i$ under gradient flow, largely building off of \cite{Arora:2018vn, Arora:2019ug}. We highlight the main derivations as they relate to the theorems in this paper and defer to the appendices of \cite{Arora:2018vn, Arora:2019ug} for additional details and a fuller treatment of gradient flow derivations. We also provide additional commentary and discussion here due to space constraints in the main body of the paper.  

\subsubsection{Theorem 2 (Gradient flow, with penalty)} 
\label{sec:gd_pen}
\paragraph{Setup \& preliminaries.} We omit the details of deriving the $\dot{\sigma}$ and $\dot{W}$ under un-regularized gradient flow; instead, we defer to the appendix of \cite{Arora:2018vn} for a detailed treatment and build upon their results for our derivations. 

\paragraph{Evolution of $W$ and $\sigma$ under gradient flow without penalty.} To begin, we recall the dynamics of the end-product matrix $W$ and its singular values $\sigma$ under \textit{un-regularized} gradient flow (for details on a full derivation, see the appendices of \cite{Arora:2019ug, Arora:2018vn} for a detailed treatment):
\begin{align}
\dot{\sigma}_i &= -N (\sigma_r^2)^{\frac{N-1}{N}} \vb{u}_r^\top \nabla_W \mathcal{L}(W) \vb{v}_r
\label{eqn:sv_again}
\\
\mathrm{vec}(\dot{W}) &= -P_{W} \mathrm{vec} \left(\nabla_W \mathcal{L}(W)\right)
\label{eqn:w_again}
\end{align}
Since the explicit regularization simply adds a regularization term to the original loss function, we can re-calculate the gradient of the new, regularized loss and substitute the gradients back into the dynamics above in order to re-characterize the above dynamics under the penalty. In more formal terms, if we let the new modified loss function be represented by $ \tilde{\mathcal{L}}(W) \coloneqq  \mathcal{L}(W) + \lambda R(W)$ where $R(W) = ||W||_* / ||W||_F$ denotes the explicit penalty, then we can calculate $ \nabla_W \tilde{\mathcal{L}}(W) = \nabla_W \mathcal{L}(W) + \lambda \nabla_W R(W)$ to plug back into the un-regularized dynamics in order to characterize the trajectories under the effect of the penalty.

\paragraph{Evolution of $W$ and $\sigma$ under gradient flow with penalty.} To characterize the dynamics of gradient flow in presence of the penalty, we first derive the gradient of the penalty (i.e. $\nabla_W R(W)$) ignoring the regularization strength (hyper)parameter $\lambda$. Via the chain rule and the sub-gradient of the nuclear norm, we can express the gradient of the penalty as:
\begin{align}
\nabla_W R(W) = \frac{1}{\vert \vert W \vert \vert_F ^2} \left(UV^\top - \frac{\vert \vert W \vert \vert_*}{\vert \vert W \vert \vert_F} U \Sigma V^\top \right)
\label{eqn:penalty_gradient}
\end{align}
where $U$, $V$, $\Sigma$ are the singular matrices of the singular value decomposition of $W$ (i.e. $W = U \Sigma V^\top$). 

To first characterize the dynamics of the end-product matrix $W$, we can use the gradient of the penalty and substitute it back into \cref{eqn:w_again} along with the loss gradient. Taking \cref{eqn:penalty_gradient}, plugging it back into \cref{eqn:w_again} along with the original loss gradient $\nabla_W \mathcal{L}(W)$, and re-arranging terms, we have 
\begin{align}
        \mathrm{vec}(\dot{W}) &= -P_W \mathrm{vec} \left(\nabla_W \mathcal{L}(W) + \lambda \nabla_W R(W) \right) \nonumber \\
        &= -P_W \left( \mathrm{vec} \left(\nabla_W \mathcal{L}(W)\right) + \lambda \frac{\mathrm{vec}(UV^\top - \frac{\vert \vert W \vert \vert_*}{\vert \vert W \vert \vert_F} U \Sigma V^\top)}{\vert \vert W \vert \vert_F^2} \right) \nonumber \\
        &= -P_W \left( \mathrm{vec} \left(\nabla_W \mathcal{L}(W)\right) + \lambda \frac{\mathrm{vec}(UV^\top - U \tilde{\Sigma} V^\top)}{\vert \vert W \vert \vert_F^2} \right) \nonumber 
\end{align}
where $\tilde{\Sigma} = \frac{\vert \vert W \vert \vert_*}{\vert \vert W \vert \vert_F} \Sigma$, producing the expression in \cref{eqn:gdreg_w}.

Turning now to the singular values, for analytical ease in characterizing the dynamics of the singular values ${\sigma_i}$ under gradient flow with the penalty (i.e., $\dot{\sigma}_r^{\textrm{reg}}$), we can equivalently express \cref{eqn:penalty_gradient} in terms of its diagonal elements; for instance, if we focus on the $r$-th diagonal element, this produces
\begin{align}
    \frac{\vb u_r^\top \vb v_r}{\vert \vert W \vert \vert_F ^2} \left(1 - \frac{\vert \vert W \vert \vert_*}{\vert \vert W \vert \vert_F}  \sigma_{r} \right) \nonumber 
\end{align}
where $\{\vb u_r, \vb v_r\}$ are the $i$-th left and right singular vectors of $W$ and $\sigma_{r}$ is the $r$-th diagonal element of $\Sigma$. Substituting this term along with the original loss gradient back into \cref{eqn:sv_again} and re-writing/grouping terms, we obtain our desired result in the form of \cref{eqn:gdreg_sv} for the $r$-th singular value, i.e.,
\[
  \dot{\sigma}_r^{\textrm{reg}} = \dot{\sigma}_r^{\textrm{GF}} 
   - \frac{\lambda N}{\vert \vert W \vert \vert_F^2} \left(1 - \frac{\vert\vert W \vert\vert_*}{\vert\vert W \vert\vert_F}\right) \sigma_r^{\frac{3N - 2}{2}} 
\]
where $\dot{\sigma}_r^{\textrm{GF}}$ is defined in \cref{eqn:sv_again} (i.e. the trajectory of singular values under un-regularized gradient flow).
\subsubsection{Theorem 1 (Adam, no penalty)} 
\label{sec:adam_nopen}

\paragraph{Setup \& preliminaries.}
We omit the details of deriving $P_W$ from the beginning and defer to the appendix of \cite{Arora:2018vn} for a more comprehensive review. Operations like divisions, squaring, and square-roots on vectors and matrices are to assumed to be performed element-wise unless otherwise stated. In this case, since parameters are matrices, loss gradients will generally be matrices; in this case, when referring to the variance of said objects, the variance is taken to mean the variance of the vectorized matrix (i.e. condensed into a vector) drawn from some distribution.

Per-layer weight updates under Adam in discrete time take the form, assuming no weight-decay:
\begin{equation*}
W_j^{(t+1)} = W_j^{(t)} - \alpha\frac{\hat{m}_t}{\sqrt{\hat{v}_t } + \varepsilon}
\end{equation*}
where $\hat{m}_t = \frac{m_t}{1-\beta_1^t}$, $m_t = (1 - \beta_1) m_{t-1} + \beta_1 g_{t-1}$, 
$\hat{v}_t = \frac{v_t}{1-\beta_2^t}$,
$v_t = (1 - \beta_2) v_{t-1} + \beta_2 g_{t-1}^2$ for each layer $j \in \{1,\hdots,N\}$ in a depth $N$ network. $\alpha$ is the initial learning rate, $\hat{m}_t$ is the bias-corrected first moment, $\hat{v}_t$ is the bias-corrected second moment, and $\varepsilon$ is a division-by-zero adjustment which we assume to be zero to simplify our later analysis. In turn, $m_t$ is the moving average of stochastic gradients $g_t$ and $v_t$ is the moving average of their element-wise square $g_t^2$, where $g_t = \nabla_{W_j} \mathcal{L}(W^{(t)})$, the gradient of the loss with respect to the $j$-th layer weight, $g_t^2$ is the element-wise squaring of $g_t$, and $\beta_1, \beta_2 \in (0,1)$ are discount factors for $m_t$ and $v_t$ respectively. Moving forward, for notational simplicity, we suppress the dependence of the $W$ and $W_j$ on $t$ unless otherwise stated and set $\beta_1 = \beta_2 = \beta = 1 - \epsilon$ for sufficiently small epsilon (e.g. $\epsilon = 0.001$). 

Given the loss in \cref{eqn:setup}, the gradient of the loss with respect to the $j$-th layer weight matrix is (suppressing notation on time $t$):
\begin{equation}
g \coloneqq \nabla_{W_j}\mathcal{L}(W) = \prod_{i = j + 1}^{N} W_i^\top  \left(\nabla_{W}\mathcal{L}(W)\right) \prod_{i = 1}^{j-1} W_i^\top
\label{eqn:layer_gradient}
\end{equation}
where $\nabla_{W}\mathcal{L}(W)$ is the gradient of the loss with respect to the end-product matrix $W$. While gradient flow uses this loss directly in its iterative updates for each $j$-th weight matrix, Adam makes adjustments to this loss gradient in its calculation of $\hat{m}$ and $\hat{v}$. 

\paragraph{Assumptions.} As noted earlier in \cref{sec:findings}, to approximately characterize the trajectory under Adam in closed form, albeit imperfectly, we make a few additional assumptions. More formally, in the same spirit as \cite{Adam:sgn}, we assume that the underlying distribution of stochastic gradients is stationary and approximately constant over the time horizon of the moving average of gradients so that $\hat{m}_t$ and $\hat{v}_t$ approximate estimate their respective moments of $g_t$. Namely:
\begin{equation}
\hat{m}_t \rightarrow \mathbb{E}(\hat{m}_t) \approx \nabla_{W_j} \mathcal{L}(W^{(t)}), \hspace{3mm} \hat{v}_t \rightarrow \mathbb{E}(\hat{v}_t) \approx (\nabla_{W_j} \mathcal{L}(W^{(t)}))^2 + s^2_{t,j}
\end{equation}
where $s^2_{t,j}$ is the (element-wise) variance of $\nabla_{W_j} \mathcal{L}(W^{(t)})$ with the assumption that each $W_j$ is drawn i.i.d. from a common stationary distribution. Naturally, this assumption can only hold approximately, at best, and is clearly inaccurate for gradients far in the past---to that end, we assume some sufficient degree of ``burn-in'' or progression so that past some time $t$, gradients near initialization or extremely distant past gradients do not contribute meaningfully to the moving average. As noted in \cite{Adam:sgn}, this assumed approximation is more realistic and more likely to hold in certain cases (e.g. high noise or small step size, the latter of which aligns with the spirit of gradient flow). 

\paragraph{Evolution of $W$ under Adam without penalty.} Under the assumptions above and those in \cite{Arora:2019ug}, to approximately characterize the trajectory of the end-product matrix, we can rewrite Adam's update of each weight layer $W_j$ using our assumptions above in continuous time (i.e. $\alpha \rightarrow 0$), in a fashion similar to gradient flow, as (suppressing notational dependence on time $t$):
\begin{align}
\dot{W}_j &= - \frac{\nabla_{W_j} \mathcal{L}(W)}{\sqrt{\nabla_{W_j} \mathcal{L}(W)^2 + s^2_j}} \label{eqn:adam_updateflow}
\end{align}
where, again, division, etc. here denote element-wise operations. As an aside, we note that the expression in \cref{eqn:adam_updateflow} can also be written as $\dot{W_j} = -(1 + \eta_j^2)^{-1/2}$ where $\eta^2_j \coloneqq {s^2_j}/{\nabla_{W_j} \mathcal{L}(W)^2}$, resembling a variance adaptation factor for each layer $j$ that shortens the update in directions of high (relative) variance as an adaptive means to account for varying reliability of the gradient throughout the optimization process or an approximate signal-to-noise ratio of the gradient \cite{Kingma:2015us, Adam:sgn}. Using the definition of $\nabla_{W_j} \mathcal{L}(W)$ from \cref{eqn:layer_gradient}, we can rewrite the right-hand side of \cref{eqn:adam_updateflow} as: 
\begin{equation}
- \prod_{i = j + 1}^{N} W_i^\top  (\nabla_{W} \mathcal{L}(W)) \prod_{i = 1}^{j-1} W_i^\top \odot S_j
\end{equation}
where $\odot$ denotes the Hadamard product and $S_j$ is a matrix that contains the corresponding entry-wise elements of $(\nabla_{W_j} \mathcal{L}(W)^2 + s^2_j)^{-1}$ to make the element-wise operations more explicit. To derive the evolution of the end-product matrix $\dot{W}$ from $\dot{W}_j$, we follow the approach in \cite{Arora:2019ug} but with several modifications due to presence of $S_j$. For the full details, we defer to the appendix of \cite{Arora:2019ug}. We left and right multiply by a product of other weight layers and sum across the depth of the network:
\begin{align}
\dot{W} = -\sum_{j=1}^N \prod_{j+1}^{i=N} W_i \left( \prod_{i = j + 1}^{N} W_i^\top  ((\nabla_{W} \mathcal{L}(W))) \prod_{i = 1}^{j-1} W_i^\top \odot S_j \right) \prod_1^{i = j-1} W_i
\end{align}
To simplify notation for the proceeding steps, we define the following:
\begin{align}
\nonumber A = \prod_{i = j + 1}^{N} W_i^\top, \
\nonumber B = \prod_{i = 1}^{j-1} W_i^\top, \
\nonumber L =  \nabla_{W} \mathcal{L}(W) \\
\nonumber \alpha = A^\top, \hspace{5mm}
\nonumber \beta = ALB \odot S_j, \hspace{5mm}
\nonumber \gamma = B^\top
\end{align}
Re-writing the above expression with these new definitions, we have:
\begin{align*}
 -\sum_{j=1}^N A^\top(ALB \odot S_j)B^\top = -\sum_{j=1}^N \alpha \beta \gamma 
\end{align*}
Ignoring the negative sign to focus on the summation and taking the (column-major order) vectorization of the above expression, we then have:
\begin{align}
 \mathrm{vec}\left(\sum_j \alpha \beta \gamma \right) &= \sum_j \mathrm{vec}(\alpha \beta \gamma) \nonumber \\ &=\sum_j (\gamma^\top \otimes \alpha) \mathrm{vec}(\beta) \nonumber  \\
 &=\sum_j (B \otimes A^\top) \mathrm{vec}(ALB \odot S_j) \nonumber  \\
 &=\sum_j (B \otimes A^\top)(\mathrm{vec}(ALB) \odot \mathrm{vec}(S_j)) \nonumber  \\
 &=\sum_j (B \otimes A^\top)(B^\top \otimes A)(\mathrm{vec}(L) \odot \mathrm{vec}(S_j)) \nonumber  \\
 &=\sum_j(BB^\top \otimes A^\top A)(\mathrm{vec}(L) \odot \mathrm{vec}(S_j)) \nonumber  \\
 &=\sum_{j=1}^N ((W W^\top)^{\frac{N-j}{N}} \otimes (W^\top W)^{\frac{j-1}{N}})(\mathrm{vec}(L) \odot \mathrm{vec}(S_j)) \nonumber 
\end{align}
where the first two equalities is due to the additivity of the vectorization operator and the association between vectorized products and the Kronecker product ($\mathrm{vec}(ABC) = (A^\top \otimes C) \mathrm{vec}(B)$), the fourth equality is due to the preservation of the Hadamard product over vectorization, the fifth equality is the result of applying the same logic as the first equality to $\mathrm{vec}(ALB)$, the sixth equality is the result of the mixed-product property of Kronecker products, and the simplification in the final equality can be found in the appendix of \cite{Arora:2018vn}. Previous work \cite{Arora:2018vn} has shown that, in un-regularized gradient flow, $P_W \coloneqq \sum_{j=1}^N ((W W^\top)^{\frac{N-j}{N}} \otimes (W^\top W)^{\frac{j-1}{N}})$ is a p.s.d. matrix that serves as an accelerative pre-conditioning that acts on the loss gradient. However, now there is an additional component $S_j$ so we can further simplify the previous expression:
\begin{align}
 &= \sum_{j=1}^N ((W W^\top)^{\frac{N-j}{N}} \otimes (W^\top W)^{\frac{j-1}{N}})(\mathrm{diag}(\mathrm{vec}(S_j)) \mathrm{vec}(L)) \nonumber \\
 &=\sum_{j=1}^N ((W W^\top)^{\frac{N-j}{N}} \otimes (W^\top W)^{\frac{j-1}{N}})(G_j \mathrm{vec}(L)) \nonumber 
\end{align}
where $G_j = \text{diag}(\text{vec}(S_j)$ denotes taking the elements of $\mathrm{vec}(S_j))$ and placing them along the diagonal of a zero matrix. Finally, substituting the term back in for $L$, we have a approximate characterization of the trajectory of the end-product matrix $W$ under Adam:
\begin{equation}
\mathrm{vec}(\dot{W}) = -P_{W, G} \mathrm{vec}(\nabla_{W} \mathcal{L}(W))
\label{eqn:adam_wtraj}
\end{equation}
where $P_{W, G} \coloneqq \sum_{j=1}^N ((W W^\top)^{\frac{N-j}{N}} \otimes (W^\top W)^{\frac{j-1}{N}})G_j$.

\paragraph{Positive semi-definiteness of $P_{W,G}.$}
Under gradient flow, \cite{Arora:2018vn} has shown that
$P_W = \sum_{j=1}^N P_j$ where $P_j = (W^\top W)^{\frac{N-j}{N}} \otimes (W W^\top)^{\frac{j-1}{N}}$ and $P_j$ is symmetric; to show that $P_j$ is p.s.d., note that, suppressing notation for time dependence $t$, we can equivalently write for some $j$,
\begin{equation}
P_j = A( (D D^\top)^{\frac{j-1}{N}} \otimes (D^\top D)^{\frac{N-j}{N}} )A^\top
\end{equation}
where $A = U \otimes V$, the Kronecker product of the matrices containing the left and right singular vectors of $W$ respectively, and $D$ is the diagonal matrix containing the singular values of $W$. Since the term in the middle is the Kronecker product of two diagonal matrices (with non-negative singular values along the diagonal), it's also a diagonal matrix. Therefore, for non-zero $x$ and some fixed $j$,
\[x^\top A((D D^\top)^{\frac{j-1}{N}} \otimes (D^\top D)^{\frac{N-j}{N}} )A^\top x \geq ||A^\top x||^2c \geq 0 \]
where $c = \sigma^{2\frac{j-1}{N}}_{1} \cdot \sigma^{2\frac{N-j}{N}}_{1}$ and $\sigma_{1}$ is the smallest singular value of $W$ or, equivalently, the smallest value of the diagonal matrix $D$ which is by definition non-negative. $G_j$ is also symmetric, since it is diagonal, and p.s.d. since its diagonal entries (and therefore eigenvalues) are by definition non-negative. Since $P_jG_j$ is the product of a real symmetric p.s.d. matrix $P_j$ and a real positive diagonal matrix $G_j$, the product is also p.s.d. \cite{Ramuzat2022}. Therefore, $P_{W,G} = \sum_{j=1}^N P_j G_j$ is also p.s.d. as a sum of p.s.d. matrices. Similar to $P_W$ under un-regularized gradient flow, $P_{W,G}$ acts as a pre-conditioning on the loss gradient which now also includes the variance of the loss gradient with respect to each layer $j$.

\paragraph{Implications of $P_{W,G}.$} As noted in \cite{Arora:2018vn}, $P_W$ in the original case of un-regularized gradient flow serves as an accelerative pre-conditioning on the loss gradient (\cref{eqn:gd_w}) that intensifies with depth and can only exist with sufficient depth ($N \geq 2$). More specifically, \cite{Arora:2018vn} show that the eigenvalues of $P_W$ are $\sum_{j=1}^N \sigma^{2\frac{N-j}{N}}_r \sigma^{2\frac{j-1}{N}}_{r'}$, corresponding to its eigenvectors $\mathrm{vec}(\vb u_r \vb v_{r'}^\top)$ where $\vb u_r$ and $\vb v_{r'}^\top$ are the left and right singular vectors of the end-product matrix $W \in \mathbb{R}^{m \times n}$. With non-trivial depth ($N \geq 2$), increasing $\sigma_r$ or $\sigma_{r'}$ results in an increase in the eigenvalue corresponding to the eigen-direction (i.e. rank one matrix) $\vb u_r \vb v_{r'}^\top$, which can be interpreted as the pre-conditioning favoring or accelerating in directions that correspond to singular vectors whose presence in the end-product matrix $W$ is stronger. In the case of degenerate depth ($N=1$), however, the pre-conditioning $P_W$ collapses into the identity, causing all of its non-zero eigenvalues to reduce down to unity---resulting in no accelerative favoring of any eigen-direction(s) during optimization. 

In the case of Adam's pre-conditioning at degenerate depth, $P_{W,G}$'s non-zero eigenvalues, unlike $P_W$, are no longer just unity and static which produce no accelerative favoring in any direction; instead, its eigenvalues become $[(1 + \eta^2)^{-1/2}]_{r,r'}$ where now $\eta^2 = s^2/\nabla_{W} \mathcal{L}(W)^2$, allowing for continued variation and accelerative effects even at depth 1. For non-degenerate depths ($N\geq2$), each individual $P_j$ within $P_{W,G}$ is now normalized by $G_j$ so that each layer's individual contribution to the overall pre-conditioning is normalized by a function of that layer's squared gradient and gradient variance, unlike before. 


\paragraph{Evolution of $\sigma$ under Adam without penalty.}  For the trajectories of the singular values $\{\sigma_i\}$, we largely build atop the approach in \cite{Arora:2019ug}, whose key steps we highlight here. We start by defining the analytic singular value decomposition of the end-product matrix as $W(t) = U(t)D(t)V(t)^\top$. Differentiating $W(t)$ with respect to time $t$:
\[ \dot{W}(t) = \dot{U}(t) D(t) V(t)^\top + U(t) \dot{D}(t) V(t)^\top + U(t) D(t)\dot{V}(t) \]
If we then left-multiply the above expression by $U(t)^\top$ and right-multiply by $V(t)$, we see that:
\[  U(t)^\top \dot{W}(t)V(t) =  U(t)^\top \dot{U}(t) D(t) +  \dot{D}(t) + D(t)\dot{V}(t)V(t) \]
where the orthonormal columns of $U(t)$ and $V(t)$ have helped simplify the earlier expression. Focusing on the diagonal elements of the above expression and suppressing the notation for time dependence, we see that they are:
\[
\vb u^\top_i \dot{W} \vb v_i = \vb u^\top_i \vb{\dot{u}}_i^\top \sigma_i + \dot{\sigma}_i + \sigma_i \vb{\dot{v}}_i^\top \vb{v}_i 
\]
where $\vb u_i$ and $\vb v_i$ are the $i$-th left and right singular vectors associated with the $i$-th diagonal element or, equivalently, the $i$-th singular value $\sigma_i$. Since the columns of $U$ are orthonormal by definition, and because $\vb u_i$ and $\vb v_i$ has constant unit length by definition (i.e. $\vb u_i^\top \dot{\vb u}_i = \frac{1}{2} \frac{d}{dt} \vert \vert \vb u_i(t) \vert \vert_2^2 = 0$), the above equation can be distilled down into:
\[
\dot{\sigma}_i = \vb u^\top_i \dot{W} \vb v_i
\]
Taking the (column-order first) vectorization of the expression above, then:
\begin{equation}
\mathrm{vec}(\dot{\sigma}_i) = \dot{\sigma}_i = (\vb u^\top_i \otimes \vb v^\top_i) \mathrm{vec}(\dot{W}) =  \mathrm{vec}(\vb v_i \vb u_i^\top)^\top \mathrm{vec}(\dot{W}) 
\label{eqn:vec_singularvalue}
\end{equation}
We can then substitute our expression for $\mathrm{vec}(\dot{W})$ from \cref{eqn:adam_wtraj} to characterize the singular values under Adam:
\begin{equation}
\dot{\sigma}_i =  - \mathrm{vec}(\vb v_i \vb u_i^\top)^\top  P_{W, G} \text{vec}(\nabla_{W} \mathcal{L}(W))
\end{equation}

\subsubsection{Lemma 4 (Adam, with penalty)} 
\label{sec:adam_pen}

We omit the details of deriving $\dot{\sigma}$ under gradient flow from the beginning and defer to the appendix of \cite{Arora:2018vn} for a more comprehensive review. We highlight the key parts.

\paragraph{Evolution of $W$ under Adam with penalty.} To characterize the evolution of the end-product matrix $W$ under Adam in presence of the penalty, we can simply leverage our earlier results from \cref{eqn:adam_wtraj} and combine them with the gradient of the penalty since the penalty is simply an additive component to the same loss function. Denoting our penalty by $R(W)$, we can characterize $\dot{W}$ as:
\[
\mathrm{vec}(\dot{W}) = -P_{W, G} \text{vec}(\nabla_{W} \mathcal{L}(W) + \lambda \nabla_{W} R(W))
\]
Substituting in the gradient of the penalty and vectorizing accordingly, we have:
\begin{equation}
\mathrm{vec}(\dot{W}) = -P_{W,G}\left(\mathrm{vec} \left(\nabla_W \mathcal{L}(W)\right) + \lambda \frac{\mathrm{vec}(UV^\top - U \tilde{\Sigma} V^\top)}{\vert \vert W \vert \vert_F^2}\right)
\label{eqn:adam_wtraj_penalty}
\end{equation}
where $U$ and $V^\top$ are the matrices containing the left and right singular vectors of the end-product matrix $W$ at time $t$ (time notation suppressed in the expression above), $\lambda$ is the regularization strength, and $\tilde{\Sigma}$ is a re-weighted version of a rectangular diagonal matrix containing the singular values of $W$ (i.e. $\tilde{\Sigma} = \frac{\vert \vert W \vert \vert_*}{\vert \vert W \vert \vert_F} \Sigma$). Here, we can see that $P_{W,G}$ in \cref{eqn:adam_wtraj_penalty} is accelerating not just the loss gradient in helping with optimization with respect to performance via the loss but also the penalty gradient in its tendency towards low-rankedness. 

Interestingly, we also note that the second term in the parenthesis of \cref{eqn:adam_wtraj_penalty} can be seen as a re-scaled version of $W$. Namely, if we ignore the vectorization operator, the term $(UV^{\top} -  U \tilde{\Sigma} V^\top)/\vert \vert W \vert \vert_F^2$ can be re-expressed as a new spectrally shifted and re-scaled $W$ that we can define as $\Bar{W} \coloneqq U\Bar{\Sigma}V^\top$ where $\Bar{\Sigma} = (I - \tilde{\Sigma})/\vert \vert W \vert \vert_F^2$. In other words, we see that the explicit regularizer affects the network's (i.e., $W$) trajectory by introducing a new spectrally adjusted version of itself as part of the penalization.

\paragraph{Evolution of $\sigma$ under Adam with penalty.} Similar to the approach taken to characterize $\dot{W}$ under the penalty, we can re-use the expression in \cref{eqn:vec_singularvalue} and substitute in the appropriate expression for $\mathrm{vec}(\dot{W})$ in the case of Adam with the penalty:  
\begin{equation}
\dot{\sigma}_i =  - \mathrm{vec}(\vb u_i \vb v_i^\top)^\top  P_{W, G} \left( \text{vec}(\nabla_{W} \mathcal{L}(W)) + \lambda \frac{\mathrm{vec}(UV^\top - U \tilde{\Sigma} V^\top)}{\vert \vert W \vert \vert_F^2} \right)
\end{equation}

\subsubsection{Depth 1 dynamics}
Lastly, we clarify the dynamics of the end-product matrix $W$ and its singular values $\sigma$ in the case of a depth 1 network. 

\paragraph{Un-regularized gradient flow and Adam.} As mentioned in \cref{sec:findings}, for un-regularized gradient flow, at $N=1$, the trajectories of the end-product matrix and its singular values reduce down to:
\begin{align}
\dot{\sigma}_i &= - \vb{u}_i^\top \nabla_W \mathcal{L}(W) \vb{v}_i \\
\text{vec}(\dot{W}) &= - \text{vec} \left(\nabla_W \mathcal{L}(W)\right)
\end{align}
where $P_{W} = I_{mn}$ has now reduced down to the identity; as such, the accelerative pre-conditioning that typically strengthens with depth no longer exists, as expected.

For un-regularized Adam, at $N=1$, we have:
\begin{align}
\dot{\sigma}_i &= -\mathrm{vec}(\vb v_i \vb u_i^\top)^\top (G \cdot \mathrm{vec} (\nabla_W \mathcal{L}(W))) \\
\mathrm{vec}(\dot{W}) &= -G \cdot \mathrm{vec} (\nabla_W \mathcal{L}(W))
\end{align}
where $P_{W,G} = I_{mn}$, $G_j = G$ since now $N = j = 1$, $G =  \mathrm{diag}(\mathrm{vec}(S_j))$, and $S_j$ is defined as a matrix whose elements are: $[S_j]_{m,n} = [(\nabla_{W} \mathcal{L}(W)^2 + s^2)^{-1/2}]_{m,n}$ as defined in \cref{eqn:adam_w}. While the depth-dependent accelerative pre-conditioning still no longer exists, we have a new sort of ``pre-conditoning'' in the form of a p.s.d. matrix $G$ that is a function of the squared loss gradient and its variance.

\paragraph{Regularized gradient flow and Adam.} Finally, we describe the dynamics for the end-product matrix and its singular values under gradient flow and Adam at depth 1, but now with the penalty. 

For gradient flow with the penalty, we see that:
\begin{align}
\dot{\sigma}_r &= -\vb{u}_i^\top \nabla_W \mathcal{L}(W) \vb{v}_i 
- \frac{\lambda}{\vert \vert W \vert \vert_F^2} \left(1 - \frac{\vert\vert W \vert\vert_*}{\vert\vert W \vert\vert_F}\right) \sigma_r^{\frac{1}{2}} \\
\mathrm{vec}(\dot{W}) &= -\left(\mathrm{vec} \left(\nabla_W \mathcal{L}(W)\right) + \lambda \frac{\mathrm{vec}(UV^\top - U \tilde{\Sigma} V^\top)}{\vert \vert W \vert \vert_F^2}\right)
\end{align}
As noted earlier, in the absence of any pre-conditioning, we do have an additional degree of freedom provided by the penalty both in terms of the evolution of $\dot{W}$ (i.e. via the penalty gradient and not just the loss gradient) and $\dot{\sigma}$ (i.e. to depend on its own relative magnitude unlike un-regularized gradient flow).

For Adam, we have:
\begin{align}
\dot{\sigma}_i &=  -\mathrm{vec}(\vb v_i \vb u_i^\top)^\top  G \left(\mathrm{vec} \left(\nabla_W \mathcal{L}(W)\right) + \lambda \frac{\mathrm{vec}(UV^\top - U \tilde{\Sigma} V^\top)}{\vert \vert W \vert \vert_F^2}\right) \\
\mathrm{vec}(\dot{W}) &= -G \left(\mathrm{vec} \left(\nabla_W \mathcal{L}(W)\right) + \lambda \frac{\mathrm{vec}(UV^\top - U \tilde{\Sigma} V^\top)}{\vert \vert W \vert \vert_F^2}\right) 
\end{align}
As noted earlier in \cref{sec:findings}, the key is the combination of $G$, a pre-conditioning that is a function of the variance of the loss gradient and appears under Adam and Adam-like variants, and the gradient penalty from our normalized nuclear norm ratio, that allows a depth 1 network (i.e. no depth) to generalize as well as a deep network, or deep factorization, and perform the same extent of rank reduction, as gradient flow/descent's implicit regularization in the presence of depth, to the point of perfect rank recovery as measured by effective rank like in \cite{Arora:2019ug}.



\end{document}